\documentclass{article}

\usepackage{microtype}
\usepackage{graphicx}
\usepackage{subfigure}
\usepackage{booktabs} 
\usepackage{enumitem}
\usepackage{soul}
\usepackage{xcolor}
\setlist{nosep}

\usepackage{hyperref}

\newcommand{\bs}[1]{\boldsymbol{#1}}
\newcommand\ie{\textit{i.e.}}
\newcommand\eg{\textit{e.g.}}

\newcommand{\iid}{\stackrel{\mathrm{i.i.d.}}{\sim}}
\newcommand{\Normal}{\mathcal{N}}
\newcommand{\Wishart}{\mathsf{Wishart}}

\usepackage[accepted]{icml2026}

\usepackage{amsmath}
\usepackage{amssymb}
\usepackage{mathtools}
\usepackage{amsthm}

\usepackage[capitalize,noabbrev]{cleveref}

\usepackage[textsize=tiny]{todonotes}
\usepackage{thmtools,thm-restate, multirow, multicol,xspace}

\theoremstyle{plain}
\newtheorem{theorem}{Theorem}[section]

\newtheorem{lemma}[theorem]{Lemma}

\theoremstyle{definition}
\newtheorem{definition}[theorem]{Definition}

\theoremstyle{remark}
\newtheorem{remark}[theorem]{Remark}

\icmltitlerunning{FUSE: Quantifying Uncertainty in Vision-Language Models by Bayesian Fusing Epistemic and Aleatoric Uncertainty}

\begin{document}

\twocolumn[
\icmltitle{FUSE: Quantifying Uncertainty in Vision-Language Models \\ by Bayesian Fusing Epistemic and Aleatoric Uncertainty}



\icmlsetsymbol{equal}{*}

\begin{icmlauthorlist}
\icmlauthor{Harry Zhang}{sch}
\icmlauthor{Luca Carlone}{sch}
\end{icmlauthorlist}

\icmlcorrespondingauthor{Harry Zhang}{harryz@mit.edu}
\icmlaffiliation{sch}{Massachusetts Institute of Technology}

\icmlkeywords{Machine Learning, ICML}

\vskip 0.3in
]



\printAffiliationsAndNotice{} 

\begin{abstract}
{Vision-language models (VLMs) are playing an increasingly important role across multiple domains. 
In many applications, such as robotics, it is crucial to quantify the uncertainty in the output of these models. }
We develop FUSE, a probabilistic framework for capturing two complementary sources of uncertainty in vision-language modeling: (i) aleatoric embedding-level uncertainty derived from input data vision-language ambiguity, and (ii) epistemic model-level uncertainty estimated from the semantic response diversity of VLMs.
Our approach formulates a Bayesian fusion mechanism that analytically combines these uncertainty sources to produce a scalar measure of uncertainty. This measure can be used to reliably predict the model's output correctness for downstream applications. We demonstrate that our method outperforms baselines and achieves SOTA uncertainty calibration.
\end{abstract}
\section{Introduction}

Vision-Language Models (VLMs) integrate visual, textual, and sometimes auditory modalities within the same generative framework, enabling open-domain reasoning across diverse input signals. Powered by Transformer-based architectures \cite{vaswani2017attention} and trained on massive web-scale datasets, VLMs now support a wide spectrum of applications, including visual question answering, multimodal retrieval, code generation with image context, and embodied task planning and manipulation \citep{antol2015vqa, lin2024mm, o2024open}. Recent advances in scaling laws reveal that increasing model parameters, data diversity, and context length consistently enhances emergent reasoning and cross-modal alignment capabilities \citep{kaplan2020scaling, hoffmann2022training, yang2022tensor}. However, such scaling comes with formidable computational and interpretational costs: as architectures grow, their internal mechanisms become more opaque, and their behavior more difficult to analyze, interpret, and control \citep{bommasani2022opportunities, liang2022holistic}.

\begin{figure}
    \centering
    \includegraphics[width=\linewidth]{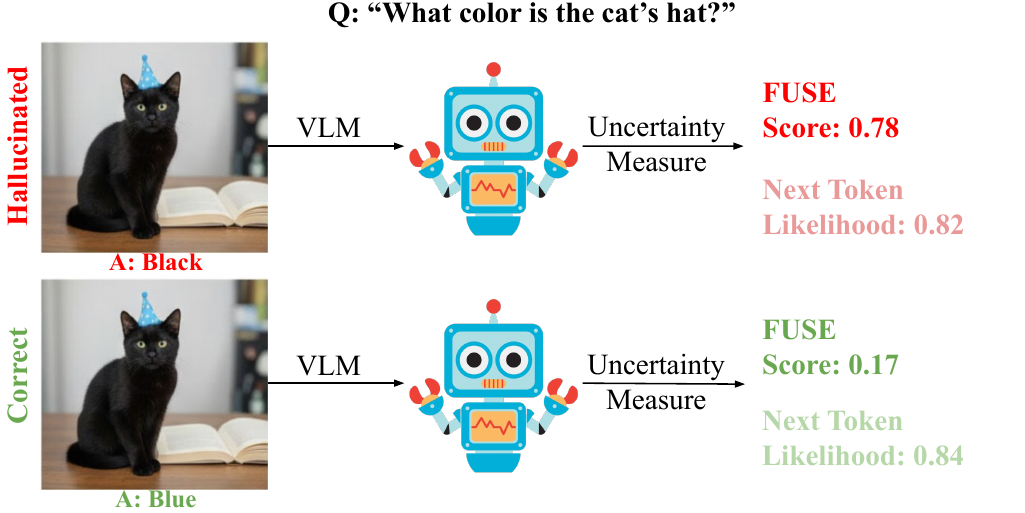}
    \caption{FUSE measures VLM uncertainty via a single uncertainty score. A higher score means more uncertain/hallucinated, and vice versa. Next-token likelihood (NTL) alone cannot distinguish uncertainty as hallucinated samples could have high NTL.}
    \label{fig:teaser-img}
\end{figure}

Despite the fast-paced progress, VLMs remain vulnerable to unreliable or inconsistent outputs, including hallucinated visual descriptions, overconfident textual reasoning, and unstable grounding between modalities, which are especially critical in high-stakes or safety-critical domains such as healthcare, finance, education, and autonomous systems, where users must understand when to trust model predictions \cite{li2023llava, abbasli2025comparing}. In such contexts, \textbf{uncertainty quantification (UQ)} becomes imperative for model reliability: a well-calibrated uncertainty estimate allows the system to understand what it does not know, guiding selective prediction and uncertainty-aware decision making \citep{abbasli2025comparing, Zhang24arxiv-CHAMP}. Recent studies show that both unimodal LLMs and multimodal variants often exhibit miscalibration, expressing over-confidence or under-confidence that poorly reflects empirical correctness \citep{chhikara2025mind, kapoor2024calibration, bai2024hallucination}. Conventional methods such as probabilities derived from next-token likelihoods or logit-based confidences fail to capture semantic ambiguity and cross-modal uncertainty. Consequently, the reported confidence of an MLLM’s output may not correspond to its true accuracy or grounding quality, thus undermining reliability and safety in deployment. 

In this work, we are motivated by the observation that uncertainty in VLMs arises from \emph{two distinct sources}. 
The first is \emph{aleatoric uncertainty}, {which reflects intrinsic ambiguity in the input and its cross-modal semantics (\eg, underspecified questions, visually ambiguous regions, or multiple plausible text–image alignments).}
The second is \emph{epistemic uncertainty} that comes from the model's generation itself, {reflecting the model’s variability due to model limitations or data scarcity.} 


We posit that both sources of uncertainty are indispensable for quantifying uncertainty in VLMs. We propose to model these two sources of uncertainty within a unified {Bayesian inference framework}. 
At the input data level, we employ a probabilistic representation using a Gaussian Process (GP), which converts deterministic embeddings from a frozen vision-language encoder such as CLIP \cite{radford2021learning} into distributions that quantify representation uncertainty; this uncertainty is used as the {prior} for our uncertainty measure. 
At the model level, we treat the variability of semantics from multiple sampled responses {due to limited knowledge, weak grounding, or distribution shift} as empirical evidence, summarizing their dispersion via an \emph{evidence} statistic. 
By connecting these two levels probabilistically, we interpret the GP as a \emph{prior} over latent semantic alignment and the response-level dispersion as a \emph{likelihood} that provides data-driven evidence of uncertainty. 
We prove the resulting {joint Bayesian update} yields a closed-form posterior that fuses both sources of uncertainty into a single calibrated measure for downstream applications. Our results
indicate that taking advantage of the fusion of two sources of uncertainties, our method achieves state-of-the-art
performance across various metrics and datasets.

This leads to \textbf{FUSE}: \textbf{F}used \textbf{U}ncertainty with \textbf{S}emantic \textbf{E}vidence for VLMs. To summarize, our contributions include:
\begin{itemize}[leftmargin=*]
    \item Formalization of two fundamental sources of uncertainty in VLMs, providing a conceptual foundation for disentangling and quantifying uncertainty within both model representations and generative outputs.
    \item A unified probabilistic model that combines the uncertainty sources in a principled Bayesian manner, yielding an interpretable, calibrated posterior uncertainty for each multimodal query.
    \item {Mathematical proof that leads to justification for the Bayesian update used in practice.}
    \item  Empirical results on various visual question answering benchmarks that demonstrate the proposed model achieves superior calibration, selective accuracy, and uncertainty-error correlation.
\end{itemize}
\section{Related Work}
\paragraph{Multimodal LLMs.} 
Large language models augmented with multimodal capabilities (\eg, visual-language models) have shown great promise in advancing artificial general intelligence. MLLMs build on large-scale, weakly-supervised alignment of images and text. CLIP \cite{radford2021learning} popularized dual-encoder contrastive learning over 400M image-text pairs, enabling zero-shot transfer for classification and retrieval. Closely-related works such as ALIGN and BLIP \cite{jia2021scaling, li2022blip} show that scale can compensate for noise, improving zero-shot retrieval and classification. Flamingo and variants introduce a few-shot visual language model that conditions a frozen LLM on interleaved image-text context via cross-attention perceiver resamplers \cite{alayrac2022flamingo, awadalla2023openflamingo}. MiniGPT-4 and follow-ups \cite{zhu2023minigpt, wei2023instructiongpt} show that aligning visual features to a strong LLM with a minimal projector yields rich multimodal behaviors after alignment data. LLaVA connects a pretrained CLIP vision encoder to an LLM and is instruction-tuned on synthetic multimodal dialogs, becoming a widely used open baseline \cite{liu2023visual, li2023llava, liu2024improved, liu2024llava, li2024llava}. 
Many subsequent open-source MLLMs follow the same paradigm, substituting improved CLIP derivatives such as \cite{liang2023open, sun2023eva} for higher-resolution features. Qwen-VL families integrate grounding, OCR, and multilingual chat into instruction-tuned MLLMs at scale \cite{bai2023qwen, wang2024qwen2}. Describe Anything \cite{lian2025describe} uses CLIP-based confidence filtering for localized captioning tasks. This reliance on frozen CLIP encoders thus combines efficient reuse of large-scale visual representations with flexible instruction-tuned language backbones. Inspired by this trend, we tackle the first source of uncertainty, the data itself, at the embedding level, based on the disagreement between visual and textual inputs, which is given ``for free'' by CLIP-based methods. However, CLIP and alike produce deterministic embeddings that do not lend themselves well to uncertainty quantification. We adapt the embeddings probabilistically using a latent Gaussian Process.

\paragraph{Uncertainty Quantification.} 
Before the LLM era, extensive research in classical deep learning established the foundational paradigms for uncertainty quantification.  
\citet{Gal16icml-mcdropout} propose Monte Carlo Dropout, interpreting dropout at inference time as approximate Bayesian inference to capture epistemic uncertainty.  
\citet{lakshminarayanan2017simple} introduced Deep Ensembles, averaging predictions from independently trained networks to obtain robust predictive variance estimates.  
Bayesian neural networks \citep{blundell2015weight} model posterior distributions over network weights but suffered from scalability challenges, inspiring subsequent work on variational inference and stochastic gradient MCMC methods \citep{neal2012bayesian, papamarkou2022challenges}.  
To quantify data-driven (aleatoric) uncertainty, \citet{kendall2017uncertainties} formulate multi-task loss functions combining homoscedastic and heteroscedastic terms.  
Calibration-oriented works such as \citet{guo2017calibration} demonstrated that post-hoc transformations can align model confidence with empirical accuracy across deep classifiers.  
Further developments such as Evidential Deep Learning \citep{sensoy2018evidential} unify aleatoric and epistemic components under a single Dirichlet predictive distribution, while conformal prediction frameworks \citep{Shafer08jmlr-tutorial, Angelopoulos21gentle, Barber23aos-conformal} provide distribution-free uncertainty sets with finite-sample coverage guarantees. {While classical UQ methods in deep learning provide principled uncertainty, they do not resolve uncertainty for multimodal LLMs. In MLLMs token-level likelihood or entropy is often weakly correlated with semantic correctness or visual grounding \cite{chou2025mm}. The ``confidence'' being calibrated is not aligned with the event of interest (\ie, correctness) \cite{lau2025uncertainty, chen2024inside}. These gaps motivate uncertainty statistics that (i) are lightweight for MLLMs, and (ii) quantify uncertainty at the grounding level rather than purely at the token level.
}

In MLLMs, the synthesis of text and visual modalities introduces compounding sources of uncertainty.  
Each sub-process of MLLM inference contributes its own form of uncertainty
\citep{bai2024hallucination}.  While recent works attempt to mitigate hallucinations through targeted training or data strategies \citep{liu2023aligning, yu2024hallucidoctor, wang2024mitigating, yue2024less}, architectural adjustments \citep{tong2024eyes, zhai2023halle}, or modified learning procedures \citep{jiang2024hallucination}, such errors are difficult to detect or eliminate entirely in real-world, noisy multimodal data. A complementary direction focuses on {error and uncertainty estimation at inference time}.  
Input-level approaches, which measure data-level uncertainty, attempt to assess uncertainty by measuring the ambiguity of multimodal embeddings themselves \citep{tran2022plex, upadhyay2023probvlm, venkataramanan2025probabilistic}, providing indicators of representation reliability but not capturing the uncertainty of generated outputs.  
A growing body of work estimates output-level uncertainty by evaluating the semantic consistency of stochastic generations.  
These \emph{sampling-based} approaches treat a model’s variability as signal: when multiple responses diverge semantically, the system is likely uncertain.  
This family includes entropy-based metrics \citep{nikitin2024kernel, farquhar2024detecting}, weighted self-consistency scores \citep{manakul2023selfcheckgpt, lin2023generating}, and geometric or volumetric statistics in embedding space \citep{lau2025uncertainty, chen2024inside}.  
Another line of work applies distribution-free conformal prediction techniques to generative models \citep{quach2023conformal, su2024api}, ensuring reliable coverage guarantees for selective prediction.  
Furthermore, batch-level calibration methods such as Batch Calibration \citep{zhou2023batch} mitigate prompt- or context-specific biases, making confidence scores more comparable across inference batches. Other studies evaluate reliability through response perturbation and verification.  
For example, \citet{zhang2024vl, sun2023aligning, khan2024consistency, liu2023aligning} employ external verifiers or response perturbations to assess the consistency of MLLM outputs as an indicator of uncertainty.  
However, existing MLLM UQ methods largely operate at the {generation level}, often overlooking uncertainty embedded in the model’s latent representation itself.  
Current approaches often treat uncertainty scores as heuristic rather than principled probabilistic quantities.  
In this work, we aim to bridge this gap by proposing a joint Bayesian formulation that integrates input representation-level priors derived from latent probabilistic embedding, with generation-level evidence modeled through {semantic response diversity}.  

\section{Problem Statement}

We consider a vision-language model (VLM) that generates textual responses conditioned on visual and textual inputs. Let an input query be denoted by $q = (\mathbf{I}, \mathbf{T})$,
where $\mathbf{I}$ represents the visual input (\eg, image) and $\mathbf{T}$ the textual prompt.  
The model produces a response $y$ sampled from its conditional distribution $y \sim p_\theta(y \mid q)$, where $\theta$ denotes fixed VLM model parameters.

{A typical VLM architecture encodes the input modalities via a frozen vision-language encoder, which gets fed into the downstream LLM backbone}. We assume access to the model's frozen vision-language encoder (\eg, CLIP), providing deterministic embeddings for the input modalities $\mathbf{z}_\mathrm{I} = f_\mathrm{I}(\mathbf{I}), \mathbf{z}_\mathrm{T} = f_\mathrm{T}(\mathbf{T})\in \mathbb{R}^D$. {We then denote by $\mathbf{r}_i \in \mathbb{R}^d$ the hidden-state representation of the $i$-th sampled response $y_i$ for $i=1,\dots,n$, produced by LLM backbone.}

We distinguish two complementary forms of uncertainty. \textit{Aleatoric uncertainty} arises from inherent ambiguity or noise in the input data or task context. In this work, it is represented by the mean $\boldsymbol{\mu}_I$ and the covariance $\boldsymbol{\Sigma}_I$ of the visual input embedding, and $\boldsymbol{\mu}_T$ and $\boldsymbol{\Sigma}_T$ of the textual input embedding, reflecting ambiguity in multimodal alignment (\eg, multiple valid image-text correspondences). \textit{Epistemic uncertainty}, on the other hand, arises from {variability in the model’s predictions for a fixed input due to weak knowledge or distribution shift}. We capture this via the dispersion of sampled response embeddings $\{\mathbf{r}_i\}_{i=1}^n$ from last hidden layer vector of the last response token, summarized through a \textit{semantic evidence} statistic $\tilde V$ from the Gram matrix of responses, which quantifies the geometric diversity of responses in the model's semantic space.

Given $(\boldsymbol{\mu}_I, \boldsymbol{\Sigma}_I, \boldsymbol{\mu}_T, \boldsymbol{\Sigma}_T)$ from the probabilistically adapted encoder  and $\tilde V$ from stochastic decoding, our goal is to estimate a scalar \emph{score}: $u^\star = f(\boldsymbol{\mu}_I, \boldsymbol{\Sigma}_I, \boldsymbol{\mu}_T, \boldsymbol{\Sigma}_T, \tilde V)$,
that faithfully reflects the model’s confidence in its output ({\ie, the more uncertain the model is, the less likely its output is correct}), satisfying $\mathbb{P}[\text{correct} \mid u^\star] \approx 1 - u^\star$, and enabling interpretability of uncertainty values to support downstream decisions such as selective answering.

\begin{figure*}[h] 
    \centering
    \includegraphics[width=\linewidth]{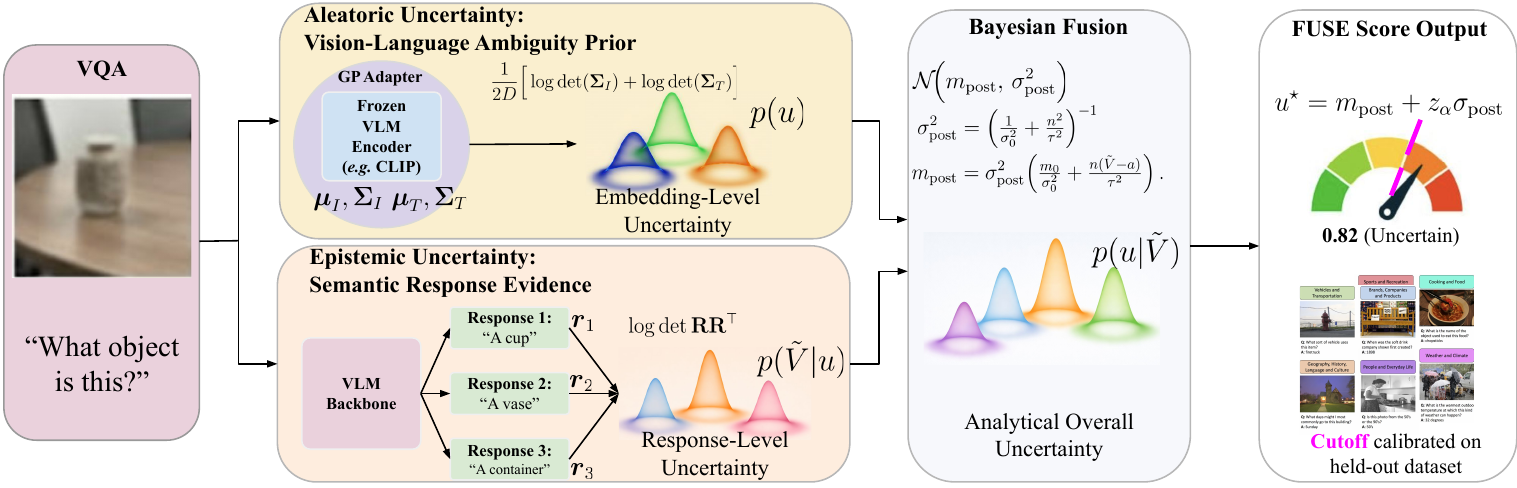}
    \vspace{-5mm}
    \caption{FUSE Overview. We extract aleatoric uncertainty from VLM's frozen encoder and use it as the prior. We extract epistemic uncertainty from the model's response and use it as the evidence. We then use Bayesian fusion to obtain an analytical posterior.}
    \label{fig:system}
\end{figure*}

\section{Methods}
Given a multimodal query (\eg, an image-text pair), FUSE produces two complementary uncertainty signals:
\begin{enumerate}[leftmargin=*]
    \item A \emph{data representation prior} $u\!\sim\!\Normal(m_0,\sigma_0^2)$ that reflects embedding-level uncertainty from \cref{def:dataprior};
    \item A \emph{semantic evidence} $\tilde V\mid u$ summarizing stochastic diversity across $n$ responses (\cref{def:response}, \cref{def:weighted}).
\end{enumerate}
Combining these via the linear-Gaussian likelihood (\cref{thm:exact-linearity}-\cref{thm:weighted-like}) yields a conjugate posterior distribution for $u$ (\cref{thm:posterior}), {from which we can derive a scalar uncertainty score {$u^\star$ --- a single measure to quantify the VLM's uncertainty.}
\label{sec:theory}

\subsection{Aleatoric Uncertainty via VLM Embeddings Prior}

Let a frozen vision-language model (\eg, CLIP) provide deterministic $D$-dimensional embeddings
$\bs{z}_\mathrm{I}=f_\mathrm{I}(\mathbf{I})$ and $\bs{z}_\mathrm{T}=f_\mathrm{T}(\mathbf{T})$ for each paired input $(\mathbf{I},\mathbf{T})$. Our first objective is to generalize deterministic embeddings to probabilistic distributions that can capture the aleatoric uncertainty.
We generalize these point embeddings to uncertainty-aware distributions by introducing a shared latent variable
$\bs{x}\in\mathbb{R}^Q$ with $Q\ll D$, where each pair $(\mathbf{I},\mathbf{T})$ is associated with the same latent point $\bs{x}$, representing the ``true'' meaning of the data in latent space.

We learn two Gaussian Process (GP) decoders that map $\bs{x}$ to the embedding spaces:
\begin{equation}
\bs{z}_{\mathcal{M}} = G_\mathcal{M}(\bs{x}) + \boldsymbol{\varepsilon}_{\mathcal{M}},\quad
\mathcal{M}\in\{\mathrm{I},\mathrm{T}\},
\end{equation}
with noise $\boldsymbol{\varepsilon}_{\mathcal{M}}$.
Each $G_\mathcal{M}$ is $D$-dimensional. For each output dimension $k\in\{1,\dots,D\}$, we place a scalar GP prior: $
g_k^{(\mathcal{M})}(\cdot)\sim\mathcal{GP}\!\big(m_\mathcal{M},k_\mathcal{M}(\cdot,\cdot)\big),\quad
k_\mathcal{M}(\bs{x},\bs{x}')=\exp\!\Big[-\tfrac{\|\bs{x}-\bs{x}'\|_2^2}{2\ell_\mathcal{M}^2}\Big]$. A full GP latent model scales cubically in the dataset size $N$. We adopt a sparse variational GPLVM following
\cite{lawrence2003gaussian, venkataramanan2025probabilistic} using $N_u\ll N$ inducing points, {which act as a small set of ``anchors'' that summarize each GP in latent space: once we know the GP function values at these anchors, the GP can interpolate to all training and test latent locations.}
For each modality $\mathcal{M}$ and output dimension $k$, we choose inducing locations
$\{\bs{u}_j^{(\mathcal{M})}\}_{j=1}^{N_u}\subset\mathbb{R}^Q$ and define inducing values
$
\bs{i}_k^{(\mathcal{M})}=
\big[g_k^{(\mathcal{M})}(\bs{u}_1^{(\mathcal{M})}),\dots,g_k^{(\mathcal{M})}(\bs{u}_{N_u}^{(\mathcal{M})})\big]^\top\in\mathbb{R}^{N_u}.
$
We posit a Gaussian variational posterior over inducing values:
$
q(\bs{i}_k^{(\mathcal{M})})=\mathcal{N}\!\big(\boldsymbol{\mu}_k^{(\mathcal{M})},\,\bs{S}_k^{(\mathcal{M})}\big),
$
and optimize its parameters $\boldsymbol{\mu}_k^{(\mathcal{M})},\bs{S}_k^{(\mathcal{M})}$ by maximizing an ELBO.
Let $\bs{g}_k^{(\mathcal{M})}=[g_k^{(\mathcal{M})}(\bs{x}_1),\dots,g_k^{(\mathcal{M})}(\bs{x}_N)]^\top$
collect latent function values at the training latent points. The ELBO for modality $\mathcal{M}$ and dimension $k$ is
$
\mathcal{L}_{\mathrm{ELBO}}^{(\mathcal{M},k)}
=
\mathbb{E}_{q}\!\big[\log p(\bs{z}_{:,k}^{(\mathcal{M})}\mid \bs{g}_k^{(\mathcal{M})})\big]
-
\mathrm{KL}\!\big(q(\bs{i}_k^{(\mathcal{M})})\,\|\,p(\bs{i}_k^{(\mathcal{M})})\big),
$
where $p(\bs{i}_k^{(\mathcal{M})})$ is the GP prior at the inducing locations. Summing over dimensions and modalities \cite{venkataramanan2025probabilistic} yields the \emph{embedding reconstruction loss}
$
\mathcal{L}_{\mathrm{emb}}
=
-\sum_{\mathcal{M}\in\{\mathrm{I},\mathrm{T}\}}\sum_{k=1}^D \mathcal{L}_{\mathrm{ELBO}}^{(\mathcal{M},k)}.
$
Since both modalities share the same $\bs{x}$ for each pair, we align the two predictive embedding distributions at $\bs{x}$.
Let $\mathcal{F}_\mathrm{I}$ and $\mathcal{F}_\mathrm{T}$ denote the GP predictive Gaussians for the two modalities. We penalize the symmetrized KL divergence
$
\mathcal{L}_{\mathrm{KL}}
=
\tfrac{1}{2}\!\left[
\mathrm{KL}\big(\mathcal{F}_\mathrm{I}\,\|\,\mathcal{F}_\mathrm{T}\big)
+
\mathrm{KL}\big(\mathcal{F}_\mathrm{T}\,\|\,\mathcal{F}_\mathrm{I}\big)
\right],
$
which has a closed form in terms of predictive means and covariances. The final objective is then
\begin{equation}
\mathcal{L}_{\mathrm{total}}=\lambda_1\,\mathcal{L}_{\mathrm{emb}}+\lambda_2\,\mathcal{L}_{\mathrm{KL}},\qquad \lambda_1,\lambda_2>0.
\end{equation}
{Optimizing $\mathcal{L}_{\mathrm{total}}$ learns: (i) latent points $\{\bs{x}_n\}_{n=1}^N$ shared by both modalities, (ii) inducing locations
$\{\bs{u}_j^{(\mathcal{M})}\}$, (iii) variational parameters $\{\boldsymbol{\mu}_k^{(\mathcal{M})},\bs{S}_k^{(\mathcal{M})}\}$, and
(iv) GP hyperparameters and noise, producing probabilistic embeddings whose predictive covariances quantify uncertainty.}

{Given a new deterministic embedding $\boldsymbol{z}^\star$ (image or text),
we infer its latent representation $\bs{x}^\star$ by solving
$\boldsymbol{x}^\star \;=\; \arg\max_{\boldsymbol{x}}\; p(\boldsymbol{x}\mid \boldsymbol{z}^\star;\hat\theta),
$} where $\hat\theta$ denotes the learned GP hyperparameters and inducing variables.
We then obtain a probabilistic embedding from the predicted GPs, where the mean and covariance
\(
\mathcal{N}(\hat{\boldsymbol{\mu}}^\star, \hat{\boldsymbol{\Sigma}}^\star)
\)
provide the \emph{probabilistic embedding}
and its corresponding uncertainty estimate.


We now propose a hierarchical Bayesian coupling: the learned GPs provide representation-level uncertainty that induces a prior over a variable $u$, which will be refined using uncertainty extracted from the VLM's sampled responses. {We start by transforming the GPs into a scalar uncertainty measure that we call the ``data representation prior''.}

\begin{definition}[Data Representation Prior]
\label{def:dataprior}
We introduce a scalar uncertainty variable $u\in\mathbb{R}$ that
summarizes the overall ambiguity of the query at the representation level.

Let the GP predictive distribution for each modality be $\mathcal{F}_\mathrm{I} = \mathcal{N}
\big({\boldsymbol{\mu}}_\mathrm{I},\, {\boldsymbol{\Sigma}}_\mathrm{I}\big)$ and $\mathcal{F}_\mathrm{T} = \mathcal{N}
\big({\boldsymbol{\mu}}_\mathrm{T},\, {\boldsymbol{\Sigma}}_\mathrm{T}\big)$.
We summarize these modality-specific uncertainties into a scalar statistic
$s = g(\boldsymbol{\mu}_I,\boldsymbol{\Sigma}_I,\boldsymbol{\mu}_T,\boldsymbol{\Sigma}_T),$
where $g(\cdot)$ is a fixed aggregation function chosen to be monotone with respect to the dispersion.
We model the prior distribution of $u$ as a Gaussian,
\begin{equation}  
u \sim \mathcal{N}(m_0,\sigma_0^2),
\qquad
m_0 = \alpha_0 + \beta_0 s,
\end{equation}
The parameters $(\alpha_0,\beta_0,\sigma_0^2)$ are calibrated on a held-out
calibration set. The procedure is shown in \cref{sec:OLS}.
\end{definition}

{The form of $m_0$ can be understood as a first-order approximation relating encoder-level uncertainty to latent task uncertainty and we choose it for mathematical convenience and simplicity.} The exact form of $s=g({\boldsymbol{\mu}}_{I}, {\boldsymbol{\Sigma}}_{I}, {\boldsymbol{\mu}}_{T}, {\boldsymbol{\Sigma}}_{T})$ is a design choice, but it should measure the \emph{dispersion} of the input representation. We choose to use a variance-based metric
$s = \frac{1}{2D}\Big[\log\det({\boldsymbol{\Sigma}}_{I}) + \log\det({\boldsymbol{\Sigma}}_{T})\Big].$
Hence, this prior captures the uncertainty contributed by the input itself. The above procedure is independent of downstream VLM inference. Therefore, our decomposition separates aleatoric uncertainty from epistemic uncertainty.\footnote{Further justification is provided in \cref{app:markov-just}.}

\subsection{Epistemic Uncertainty via Response Semantics Evidence}
{Next, we build a likelihood from epistemic uncertainty by converting response diversity into a scalar evidence statistic. We (i) sample $n$ responses, (ii) embed each response as $\mathbf r_i$, (iii) aggregate them into a scatter matrix, and (iv) use the $\log\det$ of the matrix as evidence. Under mild assumptions, we show this likelihood can be fused with the prior introduced above using a closed-form Bayesian update.}
\begin{definition}[Model Response]
\label{def:response}
Conditional on the uncertainty level $u$, we model each sampled $d$-dimensional response embedding as
\begin{equation}
    \mathbf{r}_i \mid u \;\iid\; \mathcal{N}\!\bigl(0,\ e^{u}\boldsymbol{\Sigma}_r\bigr), 
    \qquad i=1,\dots,n,
\end{equation}
where $\boldsymbol{\Sigma}_r\in\mathbb{R}^{d\times d}$ is a fixed positive-definite \emph{shape matrix} describing the base geometry of the response space.
\end{definition}

This assumption implies that higher $u$ corresponds to greater variability among responses. We follow \cite{lau2025uncertainty} and normalize the response embeddings.\footnote{More details are provided in \cref{app:just}.} Next, we collect the model responses using a Gram matrix to form the semantic scatter and evidence for inference.
\begin{definition}[Semantic Scatter and Evidence]
\label{def:semantic}
Given uncertainty level $u$ with whitened $\{\mathbf{r}_i\}_{i=1}^n$, stack rows into $\bs R=[\bs r_1,\cdots,\bs r_n]^\top\in\mathbb{R}^{n\times d}$. Let the semantic scatter matrix be $\mathbf{S}=\bs R\bs R^\top$.  We define the semantic evidence $\Tilde{V}:= \log\det \mathbf{S}$. 
\end{definition}
{The scatter $\mathbf{S}=\bs R\bs R^\top$ summarizes how response embeddings fill the semantic space. The determinant is proportional to the squared volume of the ellipsoid spanned by the response; hence $\Tilde{V}$ measures semantic dispersion.} To build the likelihood, we first analyze the \emph{unweighted} case, where all sampled responses contribute equally. The following result establishes the key structure that the $\log\det$ of the response scatter matrix is exactly linear in $u$.

\begin{restatable}[Linearity of Semantic Evidence in $u$]{theorem}{exactlinearity}
\label{thm:exact-linearity}
 Given the unweighted semantic scatter $\mathbf{S}$ and its corresponding semantic evidence $\Tilde{V}$ in \cref{def:semantic}, we then have:
\begin{equation}
\Tilde{V} \;=\; n\,u \;+\; \log\det \bs \Lambda,
\end{equation}
where $\bs \Lambda \sim \Wishart_n(\bs I,d)$ is a $n$-dimensional Wishart random matrix. 
\end{restatable}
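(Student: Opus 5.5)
The plan is to reduce $\mathbf{S}=\bs R\bs R^\top$ to a standard Wishart matrix by factoring out the scale $e^{u}$ and the whitening. By \cref{def:response}, the rows satisfy $\mathbf{r}_i\mid u \iid \Normal(0,e^{u}\boldsymbol{\Sigma}_r)$. The scatter in \cref{def:semantic} is built from the \emph{whitened} responses, which I read as $\boldsymbol{\Sigma}_r^{-1/2}\mathbf{r}_i$ with the shape matrix known and fixed. Under that reading, the whitened rows are i.i.d.\ $\Normal(0,e^{u}\bs I_d)$ given $u$.

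First, I write each whitened row as $e^{u/2}\mathbf{w}_i$ with $\mathbf{w}_i\iid\Normal(0,\bs I_d)$. Stacking gives $\bs R=e^{u/2}\bs W$, where $\bs W\in\mathbb{R}^{n\times d}$ has i.i.d.\ standard normal entries. Then
\begin{equation*}
\mathbf{S}=\bs R\bs R^\top=e^{u}\,\bs W\bs W^\top=:e^{u}\bs\Lambda .
\end{equation*}
Second, I identify the law of $\bs\Lambda$. Its columns are the $d$ vectors $\bs W_{:,k}\in\mathbb{R}^n$, and these are i.i.d.\ $\Normal(0,\bs I_n)$. So $\bs\Lambda=\sum_{k=1}^d \bs W_{:,k}\bs W_{:,k}^\top$ is, by the standard definition, an $n\times n$ Wishart matrix with scale $\bs I_n$ and $d$ degrees of freedom, \ie\ $\Wishart_n(\bs I,d)$. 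Its law does not depend on $u$.

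Third, I take log-determinants. For the $n\times n$ matrix, $\det(e^{u}\bs\Lambda)=e^{nu}\det\bs\Lambda$, which gives $\tilde V=nu+\log\det\bs\Lambda$ as claimed.

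The main obstacle is making sure $\log\det\bs\Lambda$ is finite, \ie\ that $\mathbf{S}$ is almost surely nonsingular. Since $\operatorname{rank}(\bs W\bs W^\top)=\operatorname{rank}\bs W$, this requires $d\ge n$, the regime of interest where there are few samples in a high-dimensional hidden space. I would state this assumption explicitly and argue that a Gaussian $n\times d$ matrix has full row rank with probability one, because the set where it is rank deficient is a proper algebraic subset of Lebesgue measure zero. If $d<n$ the statement fails, since $\det\mathbf{S}=0$.

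A secondary subtlety is the normalization mentioned after \cref{def:response}. Projecting embeddings onto the unit sphere would break exact Gaussianity. I would therefore treat the theorem as a statement about the model in \cref{def:response}, with the whitening $\boldsymbol{\Sigma}_r^{-1/2}$ being exactly the operation that turns the scale matrix into $\bs I$. As an optional aside, Bartlett's decomposition gives $\log\det\bs\Lambda \stackrel{d}{=}\sum_{j=1}^n\log\chi^2_{d-j+1}$, which will be useful later for the mean and variance of the likelihood noise, but it is not needed for the identity itself.
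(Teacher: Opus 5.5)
Your proposal is correct and follows essentially the same route as the paper: the paper sets $\boldsymbol{\omega}_i = e^{-u/2}\boldsymbol{\Sigma}_r^{-1/2}\mathbf{r}_i$, writes $\mathbf{S} = e^{u}\bs\Omega\boldsymbol{\Sigma}_r\bs\Omega^\top$, specializes to the whitened case $\boldsymbol{\Sigma}_r=\bs I_d$, and identifies $\bs\Omega\bs\Omega^\top\sim\Wishart_n(\bs I,d)$, which is your whiten-then-factor-the-scale argument. Your explicit assumption $d\ge n$, which makes $\mathbf{S}$ almost surely nonsingular so that $\log\det\mathbf{S}$ is finite, is a genuine improvement, since the paper leaves this condition implicit.
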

{Hence, $\Tilde{V}$ depends \emph{exactly linearly} on $u$ with slope $n$, the dimension of response embeddings. Increasing $u$ scales each response embedding by $\sqrt{e^u}$, which scales the scatter matrix by $e^u$. Intuitively, since $\det(e^u\mathbf{S}) = e^{nu}\det(\mathbf{S})$, taking logs yields an additive shift $nu$.}

Under this construction, the semantic evidence's sampling distribution becomes a closed-form Gaussian.

\begin{restatable}[Gaussian Likelihood of Semantic Evidence]{theorem}{gaussianlikelihood}
\label{prop:gaussian_likelihood}
The semantic evidence $\Tilde{V}$ defined in \cref{def:semantic}, for given $d$ and $n$, follows $\Tilde{V} \mid u\;\sim\; \mathcal N\!\Big(a(n,d)+ n\,u,\ v(n,d)\Big),$
where $a(n, d)$ and $v(n, d)$ are analytical functions of $n$ and $d$ defined in \cref{app:gaussianunweighted}.
\end{restatable}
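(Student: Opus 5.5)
The plan is to start from \cref{thm:exact-linearity}, which gives the decomposition $\Tilde{V} = n\,u + \log\det\bs\Lambda$ with $\bs\Lambda\sim\Wishart_n(\bs I,d)$. The Wishart term does not depend on $u$, so conditioning on $u$ only shifts the location by $nu$. What remains is to characterize the law of $L:=\log\det\bs\Lambda$ for $d\ge n$ (the Wishart is nonsingular only then). That law gives $a(n,d)=\mathbb{E}[L]$ and $v(n,d)=\mathrm{Var}[L]$, and then a Gaussian form for $L$.

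\textbf{Step 1: Bartlett decomposition.} We write $\bs\Lambda = \bs B\bs B^\top$ with $\bs B$ lower triangular. Its entries are independent, with $B_{ii}^2\sim\chi^2_{d-i+1}$ for $i=1,\dots,n$. Hence
\begin{equation}
L=\log\det\bs\Lambda=\sum_{i=1}^n \log B_{ii}^2,
\end{equation}
which is a sum of $n$ independent log-chi-square variables.

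\textbf{Step 2: moments.} For $X\sim\chi^2_k$, the log-moment generating function $\mathbb{E}[X^t]=2^t\Gamma(k/2+t)/\Gamma(k/2)$ gives $\mathbb{E}\log X=\log 2+\psi(k/2)$ and $\mathrm{Var}\log X=\psi'(k/2)$, where $\psi$ is the digamma function. Summing over the independent diagonal terms yields
\begin{equation}
a(n,d)=n\log 2+\sum_{i=1}^n\psi\!\Big(\tfrac{d-i+1}{2}\Big),\qquad v(n,d)=\sum_{i=1}^n\psi'\!\Big(\tfrac{d-i+1}{2}\Big).
\end{equation}
These closed forms depend only on $(n,d)$, as the statement requires. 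Consequently $\mathbb{E}[\Tilde V\mid u]=a(n,d)+nu$ and $\mathrm{Var}[\Tilde V\mid u]=v(n,d)$ hold exactly.

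\textbf{Step 3: Gaussian form (main obstacle).} $L$ is not exactly Gaussian for finite $(n,d)$: each $\log\chi^2_k$ is left-skewed. The Gaussian claim must therefore be justified asymptotically and stated as the moment-matched Gaussian likelihood. I would argue this two ways.

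First, for each term, the delta method shows that $\log\chi^2_k$ is asymptotically $\mathcal N(\log k,\,2/k)$ as $k\to\infty$. Its standardized skewness scales as $\psi''(k/2)/\psi'(k/2)^{3/2}=O(k^{-1/2})$. So when $d$ is large relative to $n$, which is the typical regime since $d$ is a hidden-state dimension, every summand is nearly Gaussian.

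Second, $L$ is a sum of independent non-identical terms. A Lyapunov CLT (checking $\sum\mathbb{E}|\cdot|^3/v^{3/2}\to0$ using polygamma asymptotics) gives asymptotic normality as $n\to\infty$. Alternatively, I would invoke the classical result that $\log\det$ of a Wishart matrix, suitably centered and scaled, is asymptotically normal even when $n/d\to c\in(0,1)$.

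Combining these with the exact moments from Step 2 yields $\Tilde V\mid u\approx\mathcal N(a(n,d)+nu,\,v(n,d))$. The delicate point is making precise in what sense ``follows'' holds. I would present the result as exact in mean and variance, and Gaussian up to an $O(d^{-1/2})$ Berry--Esseen error, which I would bound via the third-moment sum above.
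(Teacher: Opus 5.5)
Your proposal is correct and follows the paper's proof essentially step for step: the shift $\tilde V=n\,u+\log\det\bs\Lambda$ from \cref{thm:exact-linearity}, the Bartlett reduction to independent $\log\chi^2_{d+1-j}$ terms, digamma/trigamma moments giving exactly the paper's $a(n,d)$ and $v(n,d)$, and an asymptotic-normality step. In the paper's regime ($n$ fixed, $d\to\infty$), your per-summand delta-method argument is actually the sounder justification: the paper's appeal to Lindeberg--Feller is loose there, because each summand keeps about a $1/n$ share of the variance and Feller's condition fails. One small correction: with $n$ fixed, the Lyapunov ratio $\sum_j\mathbb{E}|Y_j|^3/v(n,d)^{3/2}$ of the centered summands is of order $n^{-1/2}$, not $d^{-1/2}$, so it cannot give your $O(d^{-1/2})$ Kolmogorov bound; that bound should instead come from per-summand Berry--Esseen bounds of order $(d+1-j)^{-1/2}$, summed using the subadditivity of Kolmogorov distance under convolution of independent terms.
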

{Consequently, the evidence statistic $\tilde V=\log\det \mathbf{S}$ admits a simple \emph{linear-Gaussian} form: its mean shifts linearly with the latent uncertainty level $u$, and its variance is determined explicitly by the finite-sample Wishart moments.}

\paragraph{From Unweighted to Weighted Responses.}
In practice, sampled responses often have heterogeneous reliability: depending on the query, the VLM may generate responses with various levels of coherence due to the model's stochasticity.
We therefore introduce \emph{uncertainty weights} that modulate each response’s contribution to the overall evidence.

\begin{definition}[Uncertainty Weights]
\label{def:weights}
For each sampled response $y_i$, let $p_i$ denote the average per-token log-probability assigned by the model. 
We define the corresponding weight as
$ w_i \;=\; \exp \!\left( 2\alpha\, (1 - p_i) \right),$
where $\alpha$ is a hyperparameter.
Responses with lower confidence ($p_i$) thus receive higher weights, increasing their influence in the semantic scatter.
\end{definition}
We then define the semantic scatter and evidence with the uncertainty weights.
\begin{definition}[Weighted Semantic Scatter and Evidence]
\label{def:weighted}
Given weights $\{w_i\}_{i=1}^n$, collect responses and weights into $\bs R=[\bs r_1,\cdots,\bs r_n]^\top$ and $\bs W = \mathrm{diag}(w_1,\cdots,w_n)$, we define the weighted semantic scatter and its associated semantic evidence as
$\mathbf{S}_w \coloneqq \bs W^{1/2}\bs R\bs R^\top\bs W^{1/2}$, $\tilde V=\log\det \mathbf{S}_w$.
The unweighted case corresponds to $w_i \equiv 1$, where $\mathbf{S}_w=\mathbf{S}$ and $\tilde V=\log\det \mathbf{S}$.
\end{definition}

Similar to \cref{thm:exact-linearity}, we obtain a weighted version of the likelihood of the semantic evidence, which follows the same linear dependence on $u$.
\begin{restatable}[Weighted Semantic Evidence Likelihood]{theorem}{weightedlikelihood}
\label{thm:weighted-like}
Assume the weights $\{w_i\}$ are bounded away from $0$ and $\infty$, 
Then for fixed $d$ and $n$, we have
\begin{equation}
    \tilde V 
    \;=\; n\,u \;+\; a_w(\nu_{\mathrm{eff}},d,n,\bar w)
    \;+\; \varepsilon_w,
\end{equation}
where $\varepsilon_w {\sim} \Normal\!\Big(0,\; v(d, n)\Big)$. 
Here,
$
\bar w = \tfrac{1}{n}\sum_{i=1}^n w_i,
\;
\nu_{\mathrm{eff}} = \frac{(\sum_i w_i)^2}{\sum_i w_i^2},
$
where $a_w(n,d)$ and $v(n,d)$ are analytical functions of $n, d$ defined in \cref{app:gaussianweighted}.
\end{restatable}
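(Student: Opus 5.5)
The plan is to reduce the weighted case to the unweighted one through the multiplicativity of the determinant. By \cref{def:weighted}, $\mathbf{S}_w=\bs W^{1/2}\mathbf{S}\bs W^{1/2}$ with $\bs W$ diagonal and positive definite, since the weights are bounded away from $0$. Hence
\[
\tilde V=\log\det\mathbf{S}_w=\log\det\mathbf{S}+\sum_{i=1}^n\log w_i .
\]
Applying \cref{thm:exact-linearity} to $\log\det\mathbf{S}$ gives
\[
\tilde V=n\,u+\log\det\bs\Lambda+\sum_i\log w_i ,
\]
with $\bs\Lambda\sim\Wishart_n(\bs I,d)$. The linear dependence on $u$ with slope $n$ therefore survives exactly, and the weights enter only as a deterministic additive offset.

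I would treat the weights as fixed conditionally on $u$, i.e.\ as determined by the token log-probabilities and not by the embedding draws. This is the modeling assumption I would state explicitly. Under it, the randomness lives entirely in $\log\det\bs\Lambda$. Its exact law is known from Bartlett's decomposition:
\[
\log\det\bs\Lambda=\sum_{j=1}^n\log\chi^2_{d-j+1},
\]
a sum of independent log-chi-squares. This requires $d\ge n$ so that $\mathbf{S}$ is nonsingular, an assumption I would make explicit. Its mean and variance are
\[
\sum_j\bigl[\psi\bigl(\tfrac{d-j+1}{2}\bigr)+\log 2\bigr]
\quad\text{and}\quad
\sum_j\psi'\bigl(\tfrac{d-j+1}{2}\bigr).
\]
These are exactly the $a(n,d)$ and $v(n,d)$ of \cref{prop:gaussian_likelihood}, which I would invoke directly. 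So I would write
\[
\varepsilon_w=\log\det\bs\Lambda-a(n,d),
\]
which has the variance $v(n,d)$ claimed in the statement. I would also collect
\[
a_w=a(n,d)+\sum_i\log w_i .
\]

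The remaining task is to express $\sum_i\log w_i$ through $\bar w$ and $\nu_{\mathrm{eff}}$, as the statement's notation $a_w(\nu_{\mathrm{eff}},d,n,\bar w)$ suggests. Write $w_i=\bar w(1+\delta_i)$ with $\sum_i\delta_i=0$. Then
\[
\sum_i\log w_i=n\log\bar w+\sum_i\log(1+\delta_i).
\]
A second-order expansion gives
\[
\sum_i\log(1+\delta_i)\approx-\tfrac12\sum_i\delta_i^2=-\tfrac12\,n\Bigl(\tfrac{n}{\nu_{\mathrm{eff}}}-1\Bigr),
\]
using the identity $\sum_i w_i^2/(n\bar w^2)=n/\nu_{\mathrm{eff}}$. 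Boundedness of the weights controls the cubic remainder uniformly.

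I expect the main obstacle to be the Gaussianity of $\varepsilon_w$. For fixed finite $n,d$, the variable $\log\det\bs\Lambda$ is a sum of log-chi-squares and is not exactly normal. I would handle this the same way \cref{prop:gaussian_likelihood} is handled in the paper: adopt the Gaussian moment-matched law, justified by the CLT for the sum of independent log-chi-squares, whose skewness decays as $d$ grows. The weighted statement then inherits that approximation unchanged, since the weights contribute only a deterministic shift and do not alter the variance.
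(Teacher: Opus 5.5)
Your proposal is correct and follows essentially the same route as the paper: the exact split $\tilde V = n u + \log\det \bs W + \log\det(\bs\Omega\bs\Omega^\top)$, Bartlett moments for the Wishart term, the second-order expansion $\sum_i \log w_i \approx n\log\bar w - \tfrac{n}{2}(n/\nu_{\mathrm{eff}}-1)$, and a Gaussian approximation for $\varepsilon_w$ as $d$ grows. Where you differ is in explicitness. You state that the weights are independent of the embedding draws and that $d \ge n$, both of which the paper uses only implicitly, and you note that $a(n,d)+\sum_i\log w_i$ is the exact intercept with the $(\bar w,\nu_{\mathrm{eff}})$ form only an approximation; these make the argument's approximation status clearer than the paper's ``absorbing $R_w$'' step.
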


{Here, $a_w(\cdot)$ and $v(\cdot)$ are weighted analogues of $a, v$ from the unweighted case. 
Importantly, the dependence on $u$ retains the \emph{same slope} $n$. Weighting effectively reduces the number of independent samples contributing to the scatter and the \textit{effective degrees of freedom} $\nu_{\mathrm{eff}}$ captures this reduction.}

\paragraph{Linear-Gaussian Evidence Model.}
Theorems~\ref{thm:exact-linearity}-\ref{thm:weighted-like} yield a unified, tractable evidence model for $\tilde V$:
\begin{equation}
\label{eq:linear-gauss}
    \tilde V \mid u \;\sim\; \Normal\bigl(a + n\,u,\; \tau^2\bigr),
\end{equation}
where $(a,\tau^2)$ are given by the closed-form formulas (possibly with $\nu_{\mathrm{eff}},\bar w$) as functions of $n, d$ (see \cref{app:gaussianunweighted} and \cref{app:gaussianweighted}.) {This Gaussian distribution provides (i) a canonical scale for comparing uncertainty across queries, since $\tau^2$ depends explicitly on $(n,d)$, and (ii) a closed-form Bayesian update when combined
with the encoder-level prior. This yields a training-free posterior $p(u\mid \tilde V)$ and enables calibrated
selective prediction via a single threshold.}

\subsection{Bayesian Fusion of the Uncertainty Sources}
With both the prior using aleatoric uncertainty from input embeddings and the likelihood using epistemic uncertainty from semantic evidence, we can now fuse the two sources of uncertainty via Bayesian inference, yielding a single fused uncertainty measure for downstream applications.
\begin{restatable}[Closed-Form Posterior of Uncertainty]{theorem}{posterior}
\label{thm:posterior}
Combining the data-representation prior (\cref{def:dataprior}) with the linear-Gaussian evidence model gives
\begin{equation}
    u\mid \tilde V \ \sim\ \Normal\!\Big(m_{\mathrm{post}},\,\sigma_{\mathrm{post}}^2\Big),
\end{equation}
where $\sigma_{\mathrm{post}}^2 
    = \Big(\tfrac{1}{\sigma_0^2}+\tfrac{n^2}{\tau^2}\Big)^{-1},
    m_{\mathrm{post}} 
    = \sigma_{\mathrm{post}}^2\!\left(\tfrac{m_0}{\sigma_0^2} + \tfrac{n(\tilde V-a)}{\tau^2}\right),$ with $a, \tau^2$ given in \cref{eq:linear-gauss}.
\end{restatable}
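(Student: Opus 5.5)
The plan is the standard conjugate Gaussian computation. By \cref{def:dataprior}, the prior density is
\[
p(u)\propto \exp\!\big(-(u-m_0)^2/(2\sigma_0^2)\big).
\]
By the linear-Gaussian evidence model \cref{eq:linear-gauss}, which is justified by \cref{prop:gaussian_likelihood} in the unweighted case and \cref{thm:weighted-like} in the weighted case, the likelihood is
\[
p(\tilde V\mid u)\propto \exp\!\big(-(\tilde V-a-nu)^2/(2\tau^2)\big).
\]
Here $a$ and $\tau^2$ depend only on $(n,d)$, and possibly on $\nu_{\mathrm{eff}}$ and $\bar w$. I will treat the weights as fixed (conditioned upon) quantities, so these constants do not involve $u$. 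Bayes' rule gives $p(u\mid\tilde V)\propto p(\tilde V\mid u)\,p(u)$, with the normalizer $p(\tilde V)$ independent of $u$.

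The key step is to expand the log-posterior as a quadratic in $u$. The combined exponent is
\[
-\tfrac12\Big[u^2\big(\tfrac{1}{\sigma_0^2}+\tfrac{n^2}{\tau^2}\big) - 2u\big(\tfrac{m_0}{\sigma_0^2}+\tfrac{n(\tilde V-a)}{\tau^2}\big)\Big] + \text{const},
\]
where the constant collects every term free of $u$. Completing the square identifies the precision $\sigma_{\mathrm{post}}^{-2}=1/\sigma_0^2+n^2/\tau^2$. Dividing the linear coefficient by this precision gives the mean $m_{\mathrm{post}}$, which matches the stated formulas. A quadratic exponent with positive leading coefficient, which holds since $\sigma_0^2,\tau^2>0$, forces the normalized density to be Gaussian with these parameters.

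An equivalent route I could note in passing is to rewrite the likelihood as a Gaussian in $u$, namely $\mathcal N\big((\tilde V-a)/n,\ \tau^2/n^2\big)$. The result then follows from the product-of-Gaussians formula as a precision-weighted average.

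I expect no step to be a serious obstacle. The only subtle point is the modelling assumption that the likelihood is exactly Gaussian with $u$-independent $a$ and $\tau^2$. This is taken from the earlier theorems and \cref{eq:linear-gauss}. In particular, in the weighted case the weights, which are derived from token log-probabilities, must be treated as conditioned upon, so that $a_w$ and $v$ do not depend on $u$. I will state this explicitly and then carry out the algebra.
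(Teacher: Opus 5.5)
Your proposal is correct and matches the paper's proof: Bayes' rule, expanding the log-posterior as a quadratic in $u$ with precision $\lambda = 1/\sigma_0^2 + n^2/\tau^2$ and linear coefficient $\eta = m_0/\sigma_0^2 + n(\tilde V - a)/\tau^2$, then completing the square to obtain mean $\eta/\lambda$ and variance $\lambda^{-1}$. Your explicit remark that the weights, and hence $a$ and $\tau^2$, must be treated as conditioned upon so that they do not depend on $u$ is a useful clarification that the paper leaves implicit.
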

Using the posterior uncertainty mean and variance, we design the following single uncertainty statistic.

\begin{definition}[Fused Uncertainty Statistic]
\label{def:singlestat}
    Given posterior uncertainty parameters $m_{\mathrm{post}}$ and $\sigma_{\mathrm{post}}^2$, we define the single Fused Uncertainty Statistic $u^\star$ as the following z-score-based metric:
    $u^\star = m_{\mathrm{post}} + z_\alpha\sigma_{\mathrm{post}}$, where $z_\alpha$ is a hyperparameter, {yielding a $z_\alpha$-std bound on the uncertainty.}
\end{definition}

\subsection{Practical Calibration and Hyperparameters}

Before deployment, several quantities must be calibrated on a held-out development set {with groundtruth VQA labels:}
\begin{itemize}[leftmargin=*]
    \item The prior parameters $(\alpha_0,\beta_0,\sigma_0^2)$ from \cref{def:dataprior}, fitted by regressing empirical error rates on the compressed covariance statistic $s$ using OLS shown in \cref{sec:OLS}.
    \item The scaling constant $\alpha$ for the uncertainty weights in \cref{def:weights}, tuned to balance incoherent samples.
    
    \item The decision function of the Fused Uncertainty Statistic is calibrated via a single global threshold $\theta^\star$  on the calibration set to control the risk on a size-$N$ held-out calibration set over each calibration query $q$: define the \emph{conditional risk} at threshold $\theta$ as $R(\theta) = 
        \frac{\sum_{q=1}^N \mathbf{1}\{\, u^\star_q \le \theta,\, \hat y_q \ne y_q \,\}}
             {\sum_{q=1}^N \mathbf{1}\{\, u^\star_q \le \theta \,\}}.$ Given a user-specified maximum tolerable error rate $r \in [0,1]$,  the optimal global threshold is chosen as
$
    \theta^\star \;=\;
    \sup_\theta \bigl\{R(\theta) \le r \,\bigr\}.
$
\end{itemize}
At deployment, the system \emph{answers} if $u^\star \le \theta^\star$ 
and \emph{abstains} otherwise. 
This ensures the empirical risk among answered instances does not exceed $r$ 
on the calibration set, while maximizing coverage.
Once calibrated, these parameters remain fixed and reused across test queries.\footnote{A justification for our risk-control decision rule is provided in \cref{app:decisionrule}.}



\section{Experiments}
In our experiments, we use Llava-v1.5-13b \cite{liu2023visual}, with CLIP-ViT-L-336px \cite{radford2021learning} as its vision backbone. To calculate the prior of $u$, we thus train wrappers for both the vision and text towers of CLIP-ViT-L-336px to adapt them probabilistically with latent GPs. Following \cite{kuhn2023semantic, lau2025uncertainty}, we evaluate FUSE's performance on three distinct visual question-answering (VQA) benchmark datasets: VQAv2 \cite{goyal2017making}, OKVQA \cite{marino2019ok}, and AdVQA \cite{li2021adversarial}. We present only quantitative results here and defer qualitative analyses to \cref{app:qualitative}.

\subsection{Baselines}
We compare FUSE against a variety of uncertainty-quantification baselines, some of which are LLM-specific, but we adapt them to accommodate multi-modality. 

\textbf{UMPIRE} \cite{lau2025uncertainty}: a training-free UQ method for VLMs. It estimates uncertainty from the semantic diversity of multiple sampled responses and weights it by the model’s incoherence score. 
\textbf{Neighborhood Consistency} \cite{khan2024consistency}: examines the consistency of the model responses over rephrased questions generated by a small proxy VQG model. \textbf{LN-Entropy} \cite{malinin2020uncertainty}: normalizes the joint log-probability of each sequence by dividing it by the sequence length sampled via multinomial sampling. \textbf{Semantic Entropy} \cite{kuhn2023semantic}: models the uncertainty over different meanings by clustering the
generated sequences by \cite{he2020deberta} and measuring the clusters' entropies. \textbf{EigenScore} \cite{chen2024inside}: computes the log determinant of
the covariance matrix of response embeddings via SVD. \textbf{LoFreeCP} \cite{su2024api}: uses conformal prediction and formulates nonconformity measures using both coarse sample frequency and fine-grained semantic similarity without needing to access raw logits. \textbf{CLM} \cite{quach2023conformal}: a logit-based CP method, which uses the general risk control framework FWER \cite{angelopoulos2025learn}. We use nonconformity measures from the CP methods as their uncertainty statistics.

\subsection{Uncertainty as Correctness Prediction}
{We evaluate whether the fused uncertainty posterior produced by FUSE is a reliable proxy for response correctness.}

\textbf{Metrics.}
A good uncertainty score should satisfy: \emph{discriminative power} (separating correct vs. incorrect answers) and
\emph{calibration} (matching uncertainty to empirical error rates).

\noindent\textbf{(1) Discrimination: does $u^\star$ rank incorrect answers as more uncertain?}
Let $u^\star_{\text{correct}}$ and $u^\star_{\text{incorrect}}$ denote the uncertainty scores of correct and incorrect responses, respectively.
Our target quantity is the pairwise ranking probability
$\mathbb{P}\!\left(u^\star_{\text{correct}} < u^\star_{\text{incorrect}}\right),$
\ie, how often a randomly chosen correct response receives a lower uncertainty score than a randomly chosen incorrect response.
Following \cite{lau2025uncertainty, kuhn2023semantic}, we summarize this ranking quality with the \textbf{AUROC} of a binary classifier that
uses $u^\star$ to predict correctness (lower $u^\star$ indicates higher confidence / more likely correct).
In addition, for a chosen operating threshold on $u^\star$ we report the corresponding \textbf{true positive rate (TPR)} given multiple
\textbf{false positive rate (FPR)} requirements.

\noindent\textbf{(2) Calibration: does predicted uncertainty match empirical error?}
We test whether the (\textit{normalized}) uncertainty score is calibrated in the sense that
$\mathbb{P}(\text{correct}\mid u^\star)\;\approx\;1-u^\star$.  We follow the standard binning-based calibration protocol of \cite{guo2017calibration}: we sort and bin instances by $u^\star$ and compute per-bin average uncertainty and empirical accuracy. We report: \textbf{Calibration correlation}, the correlation between binned mean uncertainty and binned empirical error. We provide both \textbf{Pearson} and \textbf{Spearman} correlations. \textbf{Expected Calibration Error (ECE)}, which measures the (weighted) average absolute gap between binned predicted confidence and binned accuracy.

\begin{table*}[t]
\centering
\resizebox{\linewidth}{!}{
\begin{tabular}{llcccccccccccccccccccccccccccc}
\hline
& & \multicolumn{3}{c}{\textbf{AUROC} ($\uparrow$)} 
& \multicolumn{3}{c}{\textbf{TPR @ 0.1 FPR} ($\uparrow$)} 
& \multicolumn{3}{c}{\textbf{TPR @ 0.05 FPR} ($\uparrow$)} 
& \multicolumn{3}{c}{\textbf{TPR @ 0.01 FPR} ($\uparrow$)} 
& \multicolumn{3}{c}{\textbf{Pearson Corr.} ($\uparrow$)} 
& \multicolumn{3}{c}{\textbf{Spearman Corr.} ($\uparrow$)} 
& \multicolumn{3}{c}{\textbf{ECE} ($\downarrow$)} 
& \multicolumn{3}{c}{\textbf{AURAC} ($\uparrow$)} \\
\cline{3-5}\cline{6-8}\cline{9-11}\cline{12-14}\cline{15-17}\cline{18-20}\cline{21-23}\cline{24-26}
& \textbf{Method }
& VQA & OK & AdV 
& VQA & OK & AdV 
& VQA & OK & AdV 
& VQA & OK & AdV 
& VQA & OK & AdV 
& VQA & OK & AdV 
& VQA & OK & AdV 
& VQA & OK & AdV \\
\hline
\multirow{7}{*}{\rotatebox[origin=c]{90}{\textbf{Baselines}}}

& UMPIRE                    & 0.862 & 0.750 & 0.773 & 0.629 & 0.368 & 0.477 & 0.412 & 0.210 & 0.272 & 0.230 & 0.091 & 0.185 & 0.933 & 0.945 & 0.960 & 0.890 & 0.934 & 0.927 & 0.042 & 0.041 & 0.044 & 0.916 & 0.761 & 0.761 \\
& Neighborhood  & 0.761 & 0.512 & 0.660 & 0.362 & 0.095 & 0.189 & 0.170 & 0.031 & 0.069 & 0.049 & 0.008 & 0.019 & 0.789 & 0.577 & 0.543 & 0.712 & 0.580 & 0.703 & 0.331 & 0.502 & 0.353 & 0.886 & 0.629 & 0.683 \\
& LN-Entropy                & 0.763 & 0.698 & 0.639 & 0.282 & 0.244 & 0.168 & 0.112 & 0.113 & 0.106 & 0.057 & 0.030 & 0.066 & 0.563 & 0.856 & 0.913 & 0.566 & 0.699 & 0.900 & 0.055 & 0.057 & 0.072 & 0.899 & 0.741 & 0.705 \\
& Semantic Entropy          & 0.840 & 0.712 & 0.757 & 0.574 & 0.321 & 0.420 & 0.333 & 0.145 & 0.264 & 0.177 & 0.068 & 0.124 & 0.909 & 0.444 & 0.799 & 0.897 & 0.532 & 0.781 & 0.056 & 0.189 & 0.182 & 0.902 & 0.734 & 0.742 \\
& EigenScore                & 0.850 & 0.741 & 0.766 & 0.601 & 0.340 & 0.466 & 0.420 & 0.195 & 0.212 & 0.215 & 0.075 & 0.172 & 0.922 & 0.790 & 0.872 & 0.883 & 0.842 & 0.811 & 0.056 & 0.190 & 0.211 & 0.913 & 0.753 & 0.753 \\
& LoFreeCP                  & 0.721 & 0.670 & 0.682 & 0.599 & 0.323 & 0.305 & 0.392 & 0.190 & 0.298 & 0.200 & 0.092 & 0.163 & 0.811 & 0.484 & 0.733 & 0.840 & 0.621 & 0.788 & 0.074 & 0.210 & 0.223 & 0.842 & 0.693 & 0.667 \\
& CLM                       & 0.772 & 0.668 & 0.722 & 0.598 & 0.340 & 0.310 & 0.349 & 0.178 & 0.299 & 0.211 & 0.092 & 0.160 & 0.834 & 0.592 & 0.710 & 0.861 & 0.612 & 0.803 & 0.089 & 0.186 & 0.207 & 0.818 & 0.693 & 0.642 \\
\hline
& FUSE (ours)               & \textbf{0.883} & \textbf{0.761} & \textbf{0.796} & \textbf{0.644} & \textbf{0.372} & \textbf{0.489} & \textbf{0.437} & \textbf{0.218} & \textbf{0.311} & \textbf{0.269} & \textbf{0.094} & \textbf{0.200} & \textbf{0.945} & \textbf{0.945} & \textbf{0.974} & \textbf{0.921} & \textbf{0.944} & \textbf{0.941} & \textbf{0.037} & \textbf{0.037} & \textbf{0.041} & \textbf{0.940} & \textbf{0.770} & \textbf{0.793} \\
\hline
\multirow{6}{*}{\rotatebox[origin=c]{90}{\textbf{Ablations}}}

& Logistic $u^\star$        & 0.877 & 0.751 & 0.785 & 0.631 & 0.366 & 0.480 & 0.429 & 0.214 & 0.305 & 0.244 & 0.091 & 0.189 & 0.934 & 0.941 & 0.959 & 0.899 & 0.939 & 0.939 & 0.039 & 0.039 & 0.042 & 0.926 & 0.767 & 0.788 \\
& Mean-only $u^\star$       & 0.880 & 0.757 & 0.792 & 0.642 & 0.369 & 0.479 & 0.433 & 0.212 & 0.307 & 0.252 & 0.092 & 0.192 & 0.941 & 0.943 & 0.963 & 0.919 & 0.940 & 0.940 & 0.039 & 0.038 & 0.044 & 0.931 & 0.769 & 0.769 \\
& Quantile $\theta^\star$   & 0.881 & 0.757 & 0.788 & 0.628 & 0.370 & 0.480 & 0.428 & 0.213 & 0.300 & 0.251 & 0.090 & 0.188 & 0.929 & 0.941 & 0.958 & 0.899 & 0.935 & 0.929 & 0.042 & 0.040 & 0.044 & 0.930 & 0.766 & 0.779 \\
& Global Tuning             & 0.872 & 0.752 & 0.774 & 0.628 & 0.369 & 0.477 & 0.427 & 0.209 & 0.295 & 0.237 & 0.090 & 0.186 & 0.911 & 0.943 & 0.970 & 0.891 & 0.934 & 0.934 & 0.047 & 0.044 & 0.041 & 0.919 & 0.763 & 0.789 \\
& OLS $\tilde{V}$           & 0.872 & 0.751 & 0.779 & 0.629 & 0.371 & 0.478 & 0.430 & 0.208 & 0.297 & 0.244 & 0.091 & 0.188 & 0.935 & 0.942 & 0.961 & 0.902 & 0.937 & 0.926 & 0.044 & 0.042 & 0.042 & 0.918 & 0.765 & 0.782 \\
& Unweighted                & 0.864 & 0.750 & 0.774 & 0.631 & 0.368 & 0.480 & 0.430 & 0.210 & 0.296 & 0.250 & 0.091 & 0.190 & 0.939 & 0.944 & 0.960 & 0.915 & 0.940 & 0.927 & 0.039 & 0.040 & 0.043 & 0.923 & 0.764 & 0.790 \\
\hline
\end{tabular}
}
\caption{Uncertainty evaluation on VQAv2, OKVQA, and AdVQA. 
We report AUROC, TPR at 0.1, 0.05, and 0.01 FPR, calibration correlations (Pearson, Spearman), Expected Calibration Error (ECE; lower is better), and AURAC (higher is better).}
\label{tab:uq_all_metrics}
\end{table*}

\textbf{Results.}
As shown in \cref{tab:uq_all_metrics}, FUSE consistently outperforms all baselines on both discrimination (AUROC/TPR/FPR)
and calibration (correlations/ECE), including multimodal-targeted methods such as UMPIRE and Neighborhood Consistency.
On the more challenging datasets (OKVQA and AdVQA), where model outputs exhibit higher diversity and incoherence,
FUSE remains robust and provides substantially better correctness prediction.


\subsection{Selective Answering}
We also evaluate whether FUSE can improve \emph{selective answering}---\ie, abstaining on uncertain queries and answering only when the model is confident.

\textbf{Metrics.} Let $u^\star_q$ be the uncertainty score for query $q$ (lower $u^\star$ indicates higher confidence).
For any threshold $\theta$, the selective policy answers iff $u^\star_q \le \theta$ and abstains otherwise.
Varying $\theta$ induces a trade-off between \emph{coverage} (fraction answered) and \emph{accuracy} on the answered subset.
Following \cite{farquhar2024detecting, lau2025uncertainty, hullermeier2021aleatoric}, we plot the {Rejection--Accuracy curve}:
we sort queries by $u^\star$ and progressively reject the most uncertain fraction; at each rejection rate $r\in[0,1]$ (equivalently, coverage $1-r$),
we compute the accuracy on the remaining answered queries.
We summarize the overall benefit of uncertainty-based abstention with the \textbf{Area Under the Rejection--Accuracy Curve (AURAC)},
which aggregates performance across all thresholds (higher is better).

\textbf{Results.} As shown in \cref{tab:uq_all_metrics}, FUSE consistently achieves the highest AURAC across datasets.
This indicates that fusing dual-source uncertainty yields a more reliable abstention signal: the model preferentially rejects queries that are likely to be incorrect,
thereby improving accuracy on the answered subset. In practical VQA settings, this supports user-facing behaviors where the system abstains when uncertain rather than
returning low-confidence answers.
Qualitative results in \cref{app:qualitative} also suggest that FUSE is able to calibrate itself to output low scores for correct/certain output and vice versa.

\subsection{Ablation Studies}
We discuss some design choices of FUSE by ablating how we construct a single fused uncertainty statistic $u^\star$ from the posterior, and how we select the global threshold $\theta^\star$.

\textbf{Logistic $u^\star$}: we fit a logistic function on a calibration dataset, for each query $q$ we form
$m_{\text{post},q}$ and $\sigma_{\text{post},q}$ and fit $u^\star_q \;=\; \Pr(y_q{=}1\mid m_{\text{post},q},\sigma_{\text{post},q})=\sigma\!\big(\gamma_0+\gamma_1 m_{\text{post},q}+\gamma_2 \log\sigma_{\text{post},q}\big),$ using logistic regression and $y_q\in\{0,1\}$ indicates an incorrect outcome. 
\textbf{Mean-only} $u^\star:$ we test using $u^\star = m_\text{post}$.
\textbf{\textbf{Quantile} $\theta^\star$}: we choose the global threshold by enforcing a coverage constraint:
$
  \theta^\star \;=\; \inf_\theta\Big\{
  \frac{1}{N}\sum_{q}\mathbf 1\{u^\star_q \le 
\theta\} \;\ge\; \rho \Big\},
$
where $\theta^\star$ is the empirical $\rho$-quantile:
$\theta^\star \;=\; \mathrm{Quantile}_\rho\big(\{u^\star_q\}_{q=1}^N\big)$.
\textbf{Global Tuning}: we tune  $\alpha_0,\beta_0,\sigma_0^2,a,\tau^2$ involved in the Bayesian inference process together instead of analytically computing some of them. When labeled correctness indicators $y_i\in\{0,1\}$ are available, the parameters can be jointly tuned by minimizing an empirical risk over the posterior
uncertainty scores $m_{\text{post},i}$:
$
  \min_{\alpha_0,\beta_0,\sigma_0^2,a,\tau^2}
  \sum_i \ell\bigl(y_i, m_{\text{post},i}(\alpha_0,\beta_0,\sigma_0^2,a,\tau^2)\bigr),
$
where $\ell$ is the negative log-likelihood. 
\textbf{OLS} $\tilde{V}: $we compute $\tilde V$ and fit the intercept $a$ and variance $\tau^2$ in
$
\tilde V_i = a + d\,\hat{u}_i + \varepsilon_i, \;
\varepsilon_i \sim \mathcal N(0,\tau^2),
$
using least squares:
$
\hat a = \bar{\tilde V} - d\,\bar{\hat u},
\;
\hat\tau^2 = \tfrac{1}{N-2}\sum_i (\tilde V_i - \hat a - d\,\hat u_i)^2.
$ 
\textbf{Unweighted}: we measure the performance of $u^\star$ obtained from \cref{thm:exact-linearity}, where the semantic evidence is unweighted.

\textbf{Results. }As shown in \cref{tab:uq_all_metrics}, most variants of FUSE resulted in better performance than the baselines, illustrating the importance of the fused sources of uncertainty. Interestingly, with incoherence weights and a closed-form posterior with unknown parameters, Global Tuning remains a strong baseline, even though the parameters are learned. This implies the significance of using the posterior (\ie, both sources) for the final uncertainty. More ablations on \#generation, uncertainty sources, and evaluation criterion, and ablation details are shown in \cref{app:moreabl}.

\subsection{Implementation Details}
We use CLIP-ViT-L-336px as the backbone for prior and the embedding dimension $D=768$. To train the GP adapters for the CLIP heads, we use a dimension of $Q = 10$ and 250 inducing points. We train the adapters on a combination of MS-COCO \cite{lin2014microsoft}, Flickr30k \cite{plummer2015flickr30k}, and VQAv2 \cite{goyal2017making} training datasets using AdamW, learning rate of 1e-6, and a batch size of 128 for 200 epochs.  We hold out a calibration set of 500 samples for each experiment. For each baseline and query, we sample $n=50$ stochastic responses using MCT \cite{cecere2025monte}. We follow \cite{lau2025uncertainty} and use \texttt{Exact-Match} to evaluate the correctness of VQA results. For the risk tolerance $r$ in calibration, we use $r=0.2$.

\section{Conclusion}
We presented FUSE, an approach for rigorously quantifying uncertainty in VLMs. Our method probes into two complementary sources of uncertainty in VLMs: \textit{aleatoric} uncertainty from input data embeddings and \textit{epistemic} uncertainty from the model itself. Using a separate adapter for the input embedding without retraining the VLM, we probabilistically model the two sources of uncertainty and use a principled Bayesian inference procedure to combine them, yielding a single closed-form uncertainty statistic, approximating the uncertainty associated with VLM output for each query, for downstream calibration. Empirical results suggest that FUSE outperforms other baselines on uncertainty calibration and selective answering on multiple datasets.
We discuss the limitations of FUSE in \cref{app:limitations}.

 \section*{Acknowledgements}
This work was partially funded by ONR RAPID Program and by Carlone’s NSF CAREER Award.
\section*{Impact Statement}
This paper presents work whose goal is to advance the field of machine learning. There are many potential societal consequences of our work, none of which we feel must be specifically highlighted here.

\bibliography{myRefs}
\bibliographystyle{icml2026}

\newpage
\appendix
\onecolumn
\begin{center}
    {\Large \bf Supplementary Material}
\end{center}
\setcounter{section}{0}
\newcommand{\E}{\mathbb{E}}
\newcommand{\Var}{\mathrm{Var}}
\newcommand{\Tr}{\mathrm{Tr}}
\newcommand{\R}{\mathbb{R}}
\newcommand{\N}{\mathbb{N}}

\section{Justification for the Aleatoric-Epistemic Decomposition}
\label{app:markov-just}
We define the variables that appear explicitly in our pipeline:
\begin{itemize}[leftmargin=*]
    \item $\bs Z$: observed encoder embedding produced by a frozen Vision-Language encoder (\eg, CLIP).
    \item $\bs x$: latent representation inferred by the GP adapter from $\bs Z$, estimating the true representation of $\bs Z$.
    \item $\bs r$: random response embedding produced by the downstream VLM.
\end{itemize}
GP adapter defines a posterior $p(\bs x \mid \bs Z)$ capturing encoder-level uncertainty, and
the MLLM induces a conditional distribution $p_\theta(\bs r \mid \bs x)$ over responses.

We assume that the latent $\bs x$ is drawn from
        \[
            \bs x \sim p(\bs x \mid \bs Z),
        \]
        which aggregates all encoder-induced uncertainty given $\bs Z$.
        Conditioned on $\bs x$, the MLLM generates responses via
        \[
            \bs r \mid \bs x, \theta \sim p_\theta(\bs r \mid \bs x),
        \]
        with fixed parameters $\theta$. In particular, $\bs r$ depends on $\bs Z$ only 
        through $\bs x$, i.e.\ we have the Markov chain:
        \begin{equation}
            \bs Z \;\to\; \bs x \;\to\; \bs r.
            \label{markov}
        \end{equation}
Under the above assumptions, we have:

\begin{theorem}[Encoder-level aleatoric-epistemic decomposition]
\label{thm:encoder_aleatoric_epistemic}
The conditional variance of the response embedding
given the encoder output satisfies
\begin{equation}
    \mathrm{Var}[\bs r \mid \bs Z] 
    \;=\;
    \underbrace{\mathbb{E}_{\bs x \mid \bs Z}\bigl[\,\mathrm{Var}[\bs r \mid \bs x]\,\bigr]}_{\textnormal{epistemic uncertainty}}
    \;+\;
    \underbrace{\mathrm{Var}_{\bs x \mid \bs Z}\bigl[\,\mathbb{E}[\bs r \mid \bs x]\,\bigr]}_{\textnormal{aleatoric uncertainty}}.
    \label{eq:encoder_var_decomposition}
\end{equation}
\end{theorem}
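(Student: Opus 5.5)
This is the law of total variance, applied conditionally on $\bs Z$ and using the Markov chain \eqref{markov} to replace conditioning on $(\bs x,\bs Z)$ by conditioning on $\bs x$ alone. For vector-valued $\bs r$ I read $\mathrm{Var}$ as the covariance matrix. I assume $\mathbb{E}[\|\bs r\|^2\mid \bs Z]<\infty$, so that all conditional second moments exist.

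\textbf{Step 1 (conditional tower property under the Markov chain).} Since $\bs r$ depends on $\bs Z$ only through $\bs x$, we have $p(\bs r\mid \bs x,\bs Z)=p_\theta(\bs r\mid \bs x)$. Hence for any integrable $h$,
\[
\mathbb{E}[h(\bs r)\mid \bs Z]=\mathbb{E}_{\bs x\mid \bs Z}\bigl[\mathbb{E}[h(\bs r)\mid \bs x]\bigr].
\]
I will apply this with $h(\bs r)=\bs r$ and with $h(\bs r)=\bs r\bs r^\top$.

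\textbf{Step 2 (expand the conditional covariance).} Write $\bs m(\bs x):=\mathbb{E}[\bs r\mid \bs x]$. By definition,
\[
\mathrm{Var}[\bs r\mid \bs Z]=\mathbb{E}[\bs r\bs r^\top\mid \bs Z]-\mathbb{E}[\bs r\mid \bs Z]\,\mathbb{E}[\bs r\mid \bs Z]^\top .
\]
Using $\mathbb{E}[\bs r\bs r^\top\mid \bs x]=\mathrm{Var}[\bs r\mid \bs x]+\bs m(\bs x)\bs m(\bs x)^\top$ together with Step 1, the first term becomes
\[
\mathbb{E}_{\bs x\mid \bs Z}\bigl[\mathrm{Var}[\bs r\mid \bs x]\bigr]+\mathbb{E}_{\bs x\mid \bs Z}\bigl[\bs m\bs m^\top\bigr].
\]
By Step 1 again, the second term is $\mathbb{E}_{\bs x\mid \bs Z}[\bs m]\,\mathbb{E}_{\bs x\mid \bs Z}[\bs m]^\top$.

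\textbf{Step 3 (regroup).} The difference $\mathbb{E}_{\bs x\mid \bs Z}[\bs m\bs m^\top]-\mathbb{E}_{\bs x\mid \bs Z}[\bs m]\,\mathbb{E}_{\bs x\mid \bs Z}[\bs m]^\top$ is exactly $\mathrm{Var}_{\bs x\mid \bs Z}[\bs m(\bs x)]$. Substituting this into Step 2 gives \eqref{eq:encoder_var_decomposition}. The labels ``epistemic'' and ``aleatoric'' are interpretive and require no proof.

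\textbf{Main obstacle.} The only non-routine point is Step 1: justifying that $\mathrm{Var}[\bs r\mid \bs x,\bs Z]=\mathrm{Var}[\bs r\mid \bs x]$, and the same for the conditional mean. This follows directly from the conditional independence $\bs r\perp \bs Z\mid \bs x$ encoded in \eqref{markov}. Without that assumption, the inner quantities would have to be conditioned on $(\bs x,\bs Z)$ rather than on $\bs x$ alone.
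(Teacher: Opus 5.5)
Your proposal is correct and follows the paper's proof: the law of total variance conditioned on $\bs Z$, followed by the Markov chain $\bs Z \to \bs x \to \bs r$ to replace conditioning on $(\bs x,\bs Z)$ with conditioning on $\bs x$ alone. The only difference is that you derive the total-variance identity from second moments rather than citing it. You also make the covariance-matrix reading and the integrability assumption explicit, which the paper leaves implicit.
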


\begin{proof}
By \cref{markov}, $\bs r$ depends on $\bs Z$ only through $\bs x$, so the law of total variance with respect to $\bs x \mid \bs Z$ gives
\begin{align*}
    \mathrm{Var}[\bs r \mid \bs Z]
    &= \mathbb{E}_{\bs x \mid \bs Z}\bigl[\,\mathrm{Var}[\bs r \mid \bs x, \bs Z]\,\bigr]
       \;+\; \mathrm{Var}_{\bs x \mid \bs Z}\bigl[\,\mathbb{E}[\bs r \mid \bs x, \bs Z]\,\bigr].
\end{align*}
Since $\bs r \mid \bs x, \bs Z$ and $\bs r \mid \bs x$ have the same distribution (Markov chain $\bs Z \to \bs x \to \bs r$),
we obtain
\[
    \mathrm{Var}[\bs r \mid \bs Z]
    = \mathbb{E}_{\bs x \mid \bs Z}\bigl[\,\mathrm{Var}[\bs r \mid \bs x]\,\bigr]
      \;+\; \mathrm{Var}_{\bs x \mid \bs Z}\bigl[\,\mathbb{E}[\bs r \mid \bs x]\,\bigr],
\]
which is exactly~\eqref{eq:encoder_var_decomposition}.
\end{proof}

The second term in~\eqref{eq:encoder_var_decomposition} depends only on the posterior $p(\bs x \mid \bs Z)$ defined by the GP adapter. It therefore quantifies \emph{aleatoric uncertainty}  arising from the encoder and the inherent ambiguity of the multimodal input.  The first term measures residual variation of $\bs r$ when $\bs x$ is fixed; this is precisely the \emph{epistemic uncertainty} via the semantic evidence of  MLLM responses conditioned on~$\bs x$.

\section{Proof of \cref{thm:exact-linearity}}

\exactlinearity*

\begin{proof}[Proof]
Write the unique symmetric square root $\boldsymbol{\Sigma}_r^{1/2}$ so that
$\boldsymbol{\Sigma}_r=\boldsymbol{\Sigma}_r^{1/2}\boldsymbol{\Sigma}_r^{1/2}$.
For each $i$, define
\[
\boldsymbol{\omega}_i \;\coloneqq\; e^{-u/2}\,\boldsymbol{\Sigma}_r^{-1/2}\,\mathbf{r}_i.
\]
Since $\mathbf{r}_i \sim \mathcal{N}(0, e^{u}\boldsymbol{\Sigma}_r)$, it follows that
$\boldsymbol{\omega}_i \sim \mathcal{N}(0,\bs I_d)$ and $\{\boldsymbol{\omega}_i\}_{i=1}^n$ are i.i.d.

Stack the row vectors into the matrices
\[
\bs R \;\coloneqq\;
\begin{bmatrix} \bs r_1^\top \\ \vdots \\ \bs r_n^\top \end{bmatrix}
\in \mathbb{R}^{n\times d},
\qquad
\bs \Omega \;\coloneqq\;
\begin{bmatrix} \bs \omega_1^\top \\ \vdots \\ \bs \omega_n^\top \end{bmatrix}
\in \mathbb{R}^{n\times d}.
\]
By construction,
\[
\bs R \;=\; e^{u/2}\,\bs \Omega\,\boldsymbol{\Sigma}_r^{1/2}.
\]

We define the (unweighted) \emph{semantic Gram matrix} as
\[
\mathbf{S} \;\coloneqq\; \bs R \bs R^\top \in \mathbb{R}^{n\times n}.
\]
Substituting $\bs R=e^{u/2}\bs\Omega\boldsymbol{\Sigma}_r^{1/2}$ yields
\[
\mathbf{S}
\;=\;
\bigl(e^{u/2}\bs\Omega\boldsymbol{\Sigma}_r^{1/2}\bigr)
\bigl(e^{u/2}\bs\Omega\boldsymbol{\Sigma}_r^{1/2}\bigr)^\top
\;=\;
e^{u}\,\bs\Omega\,\boldsymbol{\Sigma}_r\,\bs\Omega^\top.
\]
Taking determinants and using $\det(e^{u}\bs A)=e^{un}\det(\bs A)$ for $\bs A\in\mathbb R^{n\times n}$ gives
\[
\det \mathbf{S} \;=\; e^{un}\,\det\!\bigl(\bs\Omega\,\boldsymbol{\Sigma}_r\,\bs\Omega^\top\bigr).
\]
Applying $\log$ yields
\[
\log\det \mathbf{S} \;=\; n\,u \;+\; \log\det\!\bigl(\bs\Omega\,\boldsymbol{\Sigma}_r\,\bs\Omega^\top\bigr).
\]
In the whitened setting $\boldsymbol{\Sigma}_r=\bs I_d$ (see \cref{app:just}), the above reduces to
\[
\log\det \mathbf{S} \;=\; n\,u \;+\; \log\det(\bs\Omega\bs\Omega^\top).
\]
Finally, since $\bs\Omega$ has i.i.d.\ $\mathcal{N}(0,1)$ entries, we have
$\bs\Omega\bs\Omega^\top \sim \mathsf{Wishart}_n(\bs I, d)$, which we denote by $\bs \Lambda$.
Therefore,
\[
\tilde V = \log\det \mathbf{S} \;=\; n\,u \;+\; \log\det \bs \Lambda,
\]
establishing exact linearity in $u$ with slope $n$.

\end{proof}

\section{Proof of \cref{prop:gaussian_likelihood}}
\label{app:gaussianunweighted}
\begin{lemma}[Moments of $\log \chi^2_k$]
\label{lem:logchi2_moments}
Let $X \sim \chi^2_k$. Then
\begin{equation}
\mathbb{E}[\log X] = \psi\!\Bigl(\tfrac{k}{2}\Bigr) + \log 2,
\qquad
\mathrm{Var}[\log X] = \psi_1\!\Bigl(\tfrac{k}{2}\Bigr),
\end{equation}
where $\psi$ and $\psi_1$ denote the digamma and trigamma functions, respectively.
\end{lemma}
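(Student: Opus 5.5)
The plan is to work directly with the density of $\chi^2_k$ and use its moment generating function in the logarithm. Write $X\sim\chi^2_k$ as $X = 2G$ with $G\sim\mathrm{Gamma}(k/2,1)$ (shape $k/2$, unit scale), since the $\chi^2_k$ density $x^{k/2-1}e^{-x/2}/(2^{k/2}\Gamma(k/2))$ is exactly the law of twice a unit-scale Gamma variable. Then $\log X = \log 2 + \log G$, so the $\log 2$ shift contributes only to the mean, and it suffices to show $\E[\log G]=\psi(k/2)$ and $\Var[\log G]=\psi_1(k/2)$.

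For the Gamma part, I would compute the moment generating function of $\log G$: for $t>-k/2$,
\[
M(t)=\E\bigl[e^{t\log G}\bigr]=\E[G^t]=\frac{1}{\Gamma(k/2)}\int_0^\infty g^{k/2+t-1}e^{-g}\,dg=\frac{\Gamma(k/2+t)}{\Gamma(k/2)}.
\]
Hence the cumulant generating function is $K(t)=\log\Gamma(k/2+t)-\log\Gamma(k/2)$, which is finite on an open neighbourhood of $t=0$. Differentiating at $t=0$ gives the first two cumulants:
\[
K'(0)=\psi(k/2),\qquad K''(0)=\psi_1(k/2).
\]
These cumulants are exactly the mean and variance of $\log G$, since $\psi=(\log\Gamma)'$ and $\psi_1=(\log\Gamma)''$ by definition.

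The only point needing care is justifying differentiation under the integral sign, which is what makes cumulants of $K$ equal moments of $\log G$. I would handle this by the standard fact that an MGF finite on a neighbourhood of $0$ is analytic there, with derivatives given by moments. Concretely, for $|t|<\min(1,k/4)$, the integrand $|\log g|^j g^{k/2+t-1}e^{-g}$ is dominated by an integrable function near $0$ and near $\infty$, so dominated convergence applies. Adding back $\log 2$ to the mean then yields the stated formulas. I do not expect a real obstacle: the whole argument is the Gamma integral identity plus this routine domination check.
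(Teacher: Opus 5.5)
Your proposal is correct and matches the paper's proof: write $X=2Y$ with $Y\sim\mathrm{Gamma}(k/2,1)$, use $\mathbb{E}[Y^t]=\Gamma(k/2+t)/\Gamma(k/2)$, and read off the mean and variance of $\log Y$ as the first two derivatives of $\log\mathbb{E}[Y^t]$ at $t=0$. Two of your details go beyond the paper. You name $\log\mathbb{E}[Y^t]$ as the cumulant generating function, which is why its second derivative is the variance. You also supply the domination argument for differentiating under the integral, which the paper leaves implicit.
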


\begin{proof}
Recall that if $X \sim \chi^2_k$, then by definition $X$ can be expressed as a scaled Gamma random variable:
\begin{equation}
X = 2Y, \qquad Y \sim \mathrm{Gamma}\!\left(\frac{k}{2}, 1\right),
\end{equation}
where the Gamma distribution is parameterized by shape $\alpha = k/2$ and scale $\theta = 1$, with density
\begin{equation}
f_Y(y) = \frac{1}{\Gamma(\alpha)} y^{\alpha - 1} e^{-y}, \qquad y > 0.
\end{equation}
Using this transformation, we have
\begin{equation}
\log X = \log(2Y) = \log 2 + \log Y.
\end{equation}
Since $\log 2$ is constant, we can write
\begin{equation}
\mathbb{E}[\log X] = \log 2 + \mathbb{E}[\log Y], \qquad 
\mathrm{Var}[\log X] = \mathrm{Var}[\log Y].
\end{equation}

To compute $\mathbb{E}[\log Y]$ and $\mathrm{Var}[\log Y]$, we exploit the moment generating relation of the Gamma distribution. For any real $t$ such that $\alpha + t > 0$, the $t$-th moment of $Y$ is
\begin{equation}
\mathbb{E}[Y^t] = \frac{\Gamma(\alpha + t)}{\Gamma(\alpha)}.
\end{equation}
Taking the natural logarithm of both sides gives
\begin{equation}
\log \mathbb{E}[Y^t] = \log \Gamma(\alpha + t) - \log \Gamma(\alpha).
\end{equation}
Differentiating both sides with respect to $t$ and evaluating at $t = 0$ yields
\begin{equation}
\left. \frac{d}{dt} \log \mathbb{E}[Y^t] \right|_{t=0}
= \left. \frac{d}{dt} \log \Gamma(\alpha + t) \right|_{t=0}
= \psi(\alpha),
\end{equation}
where $\psi(\cdot)$ is the \emph{digamma function}, defined as the logarithmic derivative of the Gamma function:
\begin{equation}
\psi(\alpha) = \frac{d}{d\alpha} \log \Gamma(\alpha) = \frac{\Gamma'(\alpha)}{\Gamma(\alpha)}.
\end{equation}
The left-hand side corresponds to the derivative of $\mathbb{E}[Y^t]$ at $t=0$:
\begin{equation}
\left. \frac{d}{dt} \mathbb{E}[Y^t] \right|_{t=0} = \mathbb{E}[\log Y],
\end{equation}
Hence we obtain
\begin{equation}
\mathbb{E}[\log Y] = \psi(\alpha).
\end{equation}

Similarly, differentiating twice gives
\begin{equation}
\left. \frac{d^2}{dt^2} \log \mathbb{E}[Y^t] \right|_{t=0}
= \psi_1(\alpha),
\end{equation}
where $\psi_1(\cdot)$ denotes the \emph{trigamma function}, i.e. the first derivative of the digamma function:
\begin{equation}
\psi_1(\alpha) = \frac{d}{d\alpha} \psi(\alpha).
\end{equation}
This quantity is the variance of $\log Y$, since
\begin{equation}
\mathrm{Var}[\log Y] = \mathbb{E}[(\log Y)^2] - \mathbb{E}[\log Y]^2 
= \psi_1(\alpha).
\end{equation}

\smallskip
Finally, substituting $\alpha = k/2$ and recalling that $\log X = \log 2 + \log Y$, we obtain
\begin{equation}
\mathbb{E}[\log X] = \psi\!\left(\frac{k}{2}\right) + \log 2,
\qquad
\mathrm{Var}[\log X] = \psi_1\!\left(\frac{k}{2}\right),
\end{equation}
which completes the proof.
\end{proof}

\gaussianlikelihood*
\begin{proof}
Recall from \cref{thm:exact-linearity} that in the unweighted case we define the semantic Gram matrix
\begin{equation}
\mathbf{S} \;\coloneqq\; \bs R \bs R^\top \in \mathbb{R}^{n\times n},
\qquad
\Tilde{V} \;\coloneqq\; \log\det \mathbf{S}.
\end{equation}
With response embedding whitening, we may write
\begin{equation}
\bs R \;=\; e^{u/2}\,\bs \Omega,
\qquad
\bs \Omega \in \mathbb{R}^{n\times d}\ \text{has i.i.d.\ } \mathcal{N}(0,1)\ \text{entries},
\end{equation}
and hence
\begin{equation}
\mathbf{S}
\;=\;
\bs R \bs R^\top
\;=\;
e^{u}\,\bs \Omega \bs \Omega^\top
\;=\;
e^{u}\,\bs \Lambda,
\end{equation}
where $\bs \Lambda \sim \Wishart_n(\bs I, d)$ is an $n$-dimensional Wishart random matrix with $d$ degrees of freedom. Therefore,
\begin{equation}
\Tilde{V}
=
\log\det \mathbf{S}
=
\log\det(e^{u}\bs \Lambda)
=
n\,u + \log\det \bs \Lambda .
\end{equation}

Recall the Bartlett decomposition of the Wishart distribution, there exists an upper-triangular matrix $\bs T\in\mathbb{R}^{n\times n}$ such that
$\bs \Lambda = \bs T^\top \bs T$ and the entries satisfy
\begin{equation}
\begin{cases}
    \bs T_{jj}^2\sim\chi_{\,d-j+1}^2,\quad j=1,\dots,n,\\
    \bs T_{ij}\sim\mathcal{N}(0,1),\quad 1\leq i<j\leq n,
\end{cases}
\end{equation}
with all $\{\bs T_{jj}^2\}_{j=1}^n$ mutually independent and independent of the off-diagonal Gaussians.
Consequently,
\begin{equation}
\det \bs \Lambda = \prod_{j=1}^{n} \bs T_{jj}^2,
\qquad
\log\det \bs \Lambda = \sum_{j=1}^{n} \log \bs T_{jj}^2.
\end{equation}
Equivalently, there exist independent $\chi^2$ random variables $\{\xi_j\}_{j=1}^n$ with
\begin{equation}
\xi_j \sim \chi^2_{\,d+1-j}\qquad (j=1,\dots,n),
\end{equation}
such that
$
\log\det \bs \Lambda \stackrel{d}{=} \sum_{j=1}^{n} \log \xi_j.
$

Via \cref{lem:logchi2_moments}, taking expectations and variances and summing over $j$ gives
\begin{align}
\mathbb E[\log\det \bs \Lambda]
&= \sum_{j=1}^{n}\Bigl\{\psi\!\Bigl(\tfrac{d+1-j}{2}\Bigr)+\log 2\Bigr\},\\
\mathrm{Var}[\log\det \bs \Lambda]
&= \sum_{j=1}^{n}\psi_1\!\Bigl(\tfrac{d+1-j}{2}\Bigr).
\end{align}
Thus, using $\Tilde{V}=n\,u+\log\det\bs \Lambda$ and linearity of expectation,
\begin{equation}
\mathbb{E}[\Tilde{V}\mid u]
=
n\,u + \mathbb E[\log\det \bs \Lambda]
=
a(d,n) + n\,u,
\end{equation}
and
\begin{equation}
\mathrm{Var}[\Tilde{V}\mid u]
=
\mathrm{Var}[\log\det \bs \Lambda]
=
v(d,n),
\end{equation}
where we define
\begin{equation}
a(d,n)
\;\coloneqq\;
\sum_{j=1}^{n}\Bigl\{\psi\!\Bigl(\tfrac{d+1-j}{2}\Bigr)+\log 2\Bigr\},
\qquad
v(d,n)
\;\coloneqq\;
\sum_{j=1}^{n}\psi_1\!\Bigl(\tfrac{d+1-j}{2}\Bigr).
\end{equation}

Define centered summands
\begin{equation}
Y_j \;\coloneqq\; \log \xi_j \;-\; \mathbb E[\log \xi_j],\qquad j=1,\dots,n,
\end{equation}
which are independent with finite variances $\psi_1\!\bigl((d+1-j)/2\bigr)$ and finite third moments.
Then
\begin{equation}
\log\det \bs \Lambda - \mathbb E[\log\det \bs \Lambda]
=
\sum_{j=1}^{n} Y_j .
\end{equation}
By the Lindeberg--Feller CLT for independent, non-identically distributed random variables, as $d\to\infty$ with fixed $n$,
\begin{equation}
\frac{\sum_{j=1}^n Y_j}{\sqrt{\sum_{j=1}^n \mathrm{Var}(Y_j)}}
=
\frac{\log\det \bs \Lambda - \mathbb E[\log\det \bs \Lambda]}{\sqrt{v(d,n)}}
\;\xrightarrow{d}\; \mathcal N(0,1).
\end{equation}
Combining with $\Tilde{V}=n\,u+\log\det\bs \Lambda$ yields the stated Gaussian approximation for $\Tilde{V}\mid u$ with mean $a(d,n)+n\,u$ and variance $v(d,n)$.
\end{proof}

\section{Proof of \cref{thm:weighted-like}}
\label{app:gaussianweighted}
\weightedlikelihood*
\begin{proof}[Proof]
Since $\bs r_i \iid \mathcal N(0,e^{u}\bs I_d)$, we may write
\[
\bs R \;=\; e^{u/2}\bs\Omega,
\qquad
\bs\Omega\in\mathbb R^{n\times d}\ \text{has i.i.d.\ }\mathcal N(0,1)\text{ entries}.
\]
Hence the weighted scatter matrix satisfies
\[
\bs S_w
=
\bs W^{1/2}\bs R\bs R^\top \bs W^{1/2}
=
e^{u}\,\bs W^{1/2}\bs\Omega\bs\Omega^\top \bs W^{1/2}.
\]
Taking determinants (noting $\det(e^{u}\bs A)=e^{un}\det(\bs A)$ for $\bs A\in\mathbb R^{n\times n}$) gives
\begin{equation}\label{eq:V-split}
\tilde V
=
\log\det(\bs S_w)
=
n\,u
+
\log\det\!\Big(\bs W^{1/2}\bs\Omega\bs\Omega^\top \bs W^{1/2}\Big).
\end{equation}
Using $\det(\bs W^{1/2}\bs A\bs W^{1/2})=\det(\bs W)\det(\bs A)$ for square $\bs A\in\mathbb R^{n\times n}$, we obtain the exact decomposition
\begin{equation}\label{eq:V-exact}
\tilde V
=
n\,u
+
\log\det(\bs W)
+
\log\det(\bs\Omega\bs\Omega^\top).
\end{equation}

\paragraph{Wishart term.}
Since $\bs\Omega$ is standard Gaussian, $\bs\Omega\bs\Omega^\top \sim \Wishart_n(\bs I,d)$.
By Bartlett decomposition, there exist independent $\chi^2$ random variables $\{\xi_j\}_{j=1}^n$ with
\[
\xi_j\sim \chi^2_{d+1-j}\qquad(j=1,\dots,n)
\]
such that
\(
\log\det(\bs\Omega\bs\Omega^\top)\stackrel{d}{=}\sum_{j=1}^n \log\xi_j.
\)
Therefore, by \cref{lem:logchi2_moments},
\begin{align}
\mathbb E\!\big[\log\det(\bs\Omega\bs\Omega^\top)\big]
&= \sum_{j=1}^{n}\Big\{\psi\!\Big(\tfrac{d+1-j}{2}\Big)+\log 2\Big\}
\;=\; a_0(d,n),\\
\mathrm{Var}\!\big[\log\det(\bs\Omega\bs\Omega^\top)\big]
&= \sum_{j=1}^{n}\psi_1\!\Big(\tfrac{d+1-j}{2}\Big)
\;=\; v(d,n).
\end{align}
Moreover, for fixed $n$, a normal approximation for $\sum_{j=1}^n \log\xi_j$ follows as $d\to\infty$
(e.g., via a smooth-function expansion of $\log\chi^2_k$ around its mean and the Lindeberg--Feller CLT in the growing-d.f.\ regime),
yielding
\[
\log\det(\bs\Omega\bs\Omega^\top)
\;\overset{\mathrm{approx}}{\sim}\;
\Normal\!\big(a_0(d,n),\, v(d,n)\big).
\]

\paragraph{Weight term (expressed via $\bar w$ and $\nu_{\mathrm{eff}}$).}
Write $w_i=\bar w(1+\delta_i)$ where $\delta_i\coloneqq (w_i-\bar w)/\bar w$ satisfies $\sum_{i=1}^n \delta_i=0$.
A second-order Taylor expansion of $\log(1+\delta)$ gives
\[
\sum_{i=1}^n \log w_i
=
n\log\bar w
+
\sum_{i=1}^n \log(1+\delta_i)
=
n\log\bar w
-\frac12\sum_{i=1}^n \delta_i^2
+ R_w,
\]
where the remainder satisfies $|R_w|\le C\sum_{i=1}^n|\delta_i|^3$ for a constant $C$ depending only on
the bound $w_{\min}\le w_i\le w_{\max}$ (hence $|\delta_i|$ is uniformly bounded).
Next observe that
\[
\kappa_w
=
\frac{\sum_i w_i^2}{(\sum_i w_i)^2}
=
\frac{\sum_i \bar w^2(1+\delta_i)^2}{(n\bar w)^2}
=
\frac{1}{n}\Big(1+\frac{1}{n}\sum_{i=1}^n \delta_i^2\Big),
\]
so that
\[
\sum_{i=1}^n \delta_i^2
=
n\big(n\kappa_w-1\big)
=
n\Big(\frac{n}{\nu_{\mathrm{eff}}}-1\Big).
\]
Substituting back yields
\begin{equation}\label{eq:logdetW-approx}
\log\det(\bs W)
=
\sum_{i=1}^n \log w_i
=
n\log\bar w
-\frac{n}{2}\Big(\frac{n}{\nu_{\mathrm{eff}}}-1\Big)
+ R_w.
\end{equation}
(The remainder $R_w$ is a higher-order dispersion term; under bounded weights it is controlled by the same dispersion statistics and is typically small when weights are not extremely heterogeneous.)

\paragraph{Collecting terms.}
Plugging \eqref{eq:logdetW-approx} and the Wishart approximation into the exact decomposition \eqref{eq:V-exact} gives
\[
\tilde V
=
n\,u
+
\Big[a_0(d,n)+n\log\bar w-\frac{n}{2}\Big(\frac{n}{\nu_{\mathrm{eff}}}-1\Big)\Big]
+
\underbrace{\Big(\log\det(\bs\Omega\bs\Omega^\top)-a_0(d,n)\Big)}_{\varepsilon_w}
+
R_w.
\]
Absorbing the (typically small) remainder $R_w$ into the intercept approximation yields the stated form
\(
\tilde V = n u + a_w(\nu_{\mathrm{eff}},d,n,\bar w)+\varepsilon_w
\)
with $\varepsilon_w\overset{\mathrm{approx}}{\sim}\Normal(0,v(d,n))$, where
\begin{equation}
\label{eq:aw-v-gram}
a_w(\nu_{\mathrm{eff}},d,n,\bar w)
\;\coloneqq\;
a_0(d,n)\;+\; n\log\bar w \;-\;\frac{n}{2}\Big(\frac{n}{\nu_{\mathrm{eff}}}-1\Big),
\qquad
v(d,n)\;\coloneqq\;\sum_{j=1}^{n}\psi_1\!\Big(\frac{d+1-j}{2}\Big),
\end{equation}
and
\begin{equation}
a_0(d,n)\;\coloneqq\;\sum_{j=1}^{n}\psi\!\Big(\frac{d+1-j}{2}\Big)+n\log 2,
\end{equation}
\end{proof}

\section{Proof of \cref{thm:posterior}}
\posterior*

\begin{proof}[Proof]
By Bayes' rule,
\begin{equation}
p(u\mid \tilde V)\propto p(\tilde V\mid u)\,p(u).
\end{equation}
With the stated Gaussians,
\begin{equation}
p(\tilde V\mid u)
=\frac{1}{\sqrt{2\pi\tau^2}}\exp\!\Big(-\tfrac{1}{2\tau^2}(\tilde V-a-n u)^2\Big),\qquad
p(u)=\frac{1}{\sqrt{2\pi\sigma_0^2}}\exp\!\Big(-\tfrac{1}{2\sigma_0^2}(u-m_0)^2\Big).
\end{equation}
Taking logs and discarding constants independent of $u$,
\begin{align}
\log p(u\mid \tilde V)
&= -\frac{1}{2\tau^2}\,(\tilde V-a-n u)^2 - \frac{1}{2\sigma_0^2}\,(u-m_0)^2 + \mathrm{const}\\
&= -\frac{1}{2}\Big(\tfrac{d^2}{\tau^2}+\tfrac{1}{\sigma_0^2}\Big)u^2
\;+\;\Big(\tfrac{d(\tilde V-a)}{\tau^2}+\tfrac{m_0}{\sigma_0^2}\Big)u\;+\;\mathrm{const}.
\end{align}
Introduce
\(
\lambda \coloneqq \frac{n^2}{\tau^2}+\frac{1}{\sigma_0^2},\ 
\eta \coloneqq \frac{n(\tilde V-a)}{\tau^2}+\frac{m_0}{\sigma_0^2}.
\)
Then
\begin{equation}
\log p(u\mid \tilde V)= -\frac{\lambda}{2}\,u^2 + \eta\,u + \mathrm{const}
= -\frac{\lambda}{2}\Big(u-\frac{\eta}{\lambda}\Big)^2 + \frac{\eta^2}{2\lambda} + \mathrm{const},
\end{equation}
where we completed the square using
\begin{equation}
u^2 - 2(\eta/\lambda)u = (u-\eta/\lambda)^2 - (\eta/\lambda)^2.
\end{equation}

Exponentiating and absorbing constants into a factor $Z^{-1}$,
\begin{equation}
p(u\mid \tilde V) = Z^{-1}\exp\!\Big(-\frac{\lambda}{2}\Big(u-\frac{\eta}{\lambda}\Big)^2\Big).
\end{equation}
The normalizing constant is the Gaussian integral
\(
Z = \int_{\mathbb R}\exp\!\big(-\tfrac{\lambda}{2}(u-\eta/\lambda)^2\big)\,du.
\)
Hence
\begin{equation}
p(u\mid \tilde V)
\sim \mathcal N\!\Big(\frac{\eta}{\lambda},\ \lambda^{-1}\Big).
\end{equation}
Substituting back,
\begin{equation}
\sigma_{\mathrm{post}}^2=\lambda^{-1}=\Bigl(\tfrac{1}{\sigma_0^2}+\tfrac{n^2}{\tau^2}\Bigr)^{-1},\qquad
m_{\mathrm{post}}=\frac{\eta}{\lambda}
= \sigma_{\mathrm{post}}^2\!\Bigl(\tfrac{m_0}{\sigma_0^2}+\tfrac{n(\tilde V-a)}{\tau^2}\Bigr).
\end{equation}
Thus $u\mid\tilde V$ is Gaussian with the stated parameters.

\end{proof}
\begin{remark}[Monotonicity]
     
Since $\partial m_{\mathrm{post}}/\partial \tilde V = \frac{n}{\tau^2}\sigma_{\mathrm{post}}^2>0$, the posterior mean increases monotonically with $\tilde V$. Moreover,
\begin{equation}
\text{as }\ \sigma_0^2\to\infty\ \text{(uninformative prior)},\quad
m_{\mathrm{post}}\to \hat u_E,\ \ \sigma_{\mathrm{post}}^2\to \tau^2/n^2;
\end{equation}
and
\begin{equation}
\text{as }\ \tau^2\to\infty\ \text{(uninformative evidence)},\quad
m_{\mathrm{post}}\to m_0,\ \ \sigma_{\mathrm{post}}^2\to \sigma_0^2.
\end{equation}
\end{remark}




\section{Prior Calibration}
\label{sec:OLS}




Here we show how we calibrate the input-driven prior parameters
\(
m_0(s)=\alpha_0+\beta_0 s
\)
\emph{without} using any correctness- or success-based proxy.

For each calibration query $q\in\{1,\dots,Q\}$, let
\(
s_q \in \mathbb{R}
\)
denote an \emph{input-derived} statistic computed from the learned GP.
Let
\(
\tilde V_q
\)
denote the \emph{semantic evidence} computed from sampled outputs.
We assume the linear-Gaussian hierarchical model
\begin{align}
u_q \mid s_q &\sim \Normal\!\big(\alpha_0+\beta_0 s_q,\ \sigma_0^2\big),
\label{eq:prior_calib_prior}\\
\tilde V_q \mid u_q &\sim \Normal\!\big(a_q + b_q u_q,\ \tau_q^2\big),
\label{eq:prior_calib_like}
\end{align}
where $u_q$ is the latent uncertainty scale, $b_q$ is the Gram slope (typically $b_q=n_q$), and $a_q,\tau_q^2$ are likelihood constants (e.g., from the Wishart/Bartlett moments; $a_q$ may include known deterministic terms such as $\log\det W_q$ when weights are used).

Since \eqref{eq:prior_calib_prior}--\eqref{eq:prior_calib_like} are conjugate Gaussians, we can integrate out $u_q$ in closed form:
\begin{equation}
\tilde V_q \mid s_q \sim
\Normal\!\Big(
a_q + b_q(\alpha_0+\beta_0 s_q),\ \tau_q^2 + b_q^2\sigma_0^2
\Big).
\label{eq:prior_calib_marginal}
\end{equation}
Therefore, the calibration parameters $(\alpha_0,\beta_0,\sigma_0^2)$ can be estimated by maximizing the marginal log-likelihood
\begin{equation}
\mathcal{L}(\alpha_0,\beta_0,\sigma_0^2)
\;=\;
\sum_{q=1}^{Q}
\log
\Normal\!\Big(
\tilde V_q\ ;\ a_q + b_q(\alpha_0+\beta_0 s_q),\ \tau_q^2 + b_q^2\sigma_0^2
\Big).
\label{eq:prior_calib_obj}
\end{equation}
Crucially, \eqref{eq:prior_calib_obj} depends only on the input-derived statistic $s_q$ and the unsupervised dispersion statistic $\tilde V_q$; it does not require any correctness labels or empirical success probabilities.

For fixed $\sigma_0^2$, maximizing \eqref{eq:prior_calib_obj} over $(\alpha_0,\beta_0)$ reduces to a weighted least squares problem.
Define
\begin{equation}
y_q \;\coloneqq\; \tilde V_q - a_q,
\qquad
x_q \;\coloneqq\; s_q,
\qquad
\omega_q(\sigma_0^2) \;\coloneqq\; \frac{b_q^2}{\tau_q^2 + b_q^2\sigma_0^2}.
\label{eq:prior_calib_wls_defs}
\end{equation}
Then the negative log-likelihood (up to additive constants independent of $\alpha_0,\beta_0$) is
\begin{equation}
\sum_{q=1}^Q \omega_q(\sigma_0^2)\,\Big(\tfrac{y_q}{b_q}-\alpha_0-\beta_0 x_q\Big)^2.
\label{eq:prior_calib_wls_obj}
\end{equation}
Let
\begin{equation}
\bar x_\omega \;\coloneqq\; \frac{\sum_q \omega_q x_q}{\sum_q \omega_q},
\qquad
\bar z_\omega \;\coloneqq\; \frac{\sum_q \omega_q z_q}{\sum_q \omega_q},
\qquad
z_q \;\coloneqq\; \frac{y_q}{b_q}.
\label{eq:prior_calib_wls_means}
\end{equation}
The weighted least squares solution is
\begin{align}
\hat\beta_0(\sigma_0^2)
&=
\frac{\sum_{q=1}^Q \omega_q(\sigma_0^2)\,(x_q-\bar x_\omega)\,(z_q-\bar z_\omega)}
     {\sum_{q=1}^Q \omega_q(\sigma_0^2)\,(x_q-\bar x_\omega)^2},
\label{eq:prior_calib_beta}\\
\hat\alpha_0(\sigma_0^2)
&=
\bar z_\omega - \hat\beta_0(\sigma_0^2)\,\bar x_\omega.
\label{eq:prior_calib_alpha}
\end{align}

We estimate $\sigma_0^2$ by maximizing \eqref{eq:prior_calib_obj} after substituting
$(\alpha_0,\beta_0)=(\hat\alpha_0(\sigma_0^2),\hat\beta_0(\sigma_0^2))$ from \eqref{eq:prior_calib_beta}--\eqref{eq:prior_calib_alpha}.
This yields a one-dimensional optimization in $\sigma_0^2\ge 0$, which can be solved via a standard line search.

The calibration objective \eqref{eq:prior_calib_obj} uses only:
(i) the input-derived statistic $s_q$ from the GP, and (ii) the unsupervised dispersion statistic $\tilde V_q$ computed from sampled outputs. It does not use task correctness, empirical success probabilities, or any label-derived proxy. As a result, the calibrated prior mean $m_0(s)=\alpha_0+\beta_0 s$ remains input-driven and does not incorporate model-performance information, consistent with our intended separation of uncertainty sources.

\section{More on Response Evidence}
\label{app:just}
\subsection{Assumption Justification}
Our assumption, at its core, is a Gaussian scale family model. $\bs \Sigma_r$ encodes the base geometry (orientation, anisotropy, correlation structure) of VLM response embeddings. The scalar $e^u$ encodes the overall magnitude or ``volume'' of uncertainty in semantic space. Sampling multiple responses from the MLLM corresponds to sampling from a distribution whose spread increases with uncertainty level. The Gaussian scale model is the cleanest closed-form model that captures the semantic evidence's isotropic scaling as well as mathematical tractability.

The assumption that response embeddings $r_i$ for a fixed multimodal query are distributed as a single zero-mean Gaussian (or more generally, an elliptical distribution) is empirically reasonable for LLaVA-style models. During instruction tuning, LLaVA is trained to produce a single canonical answer per input, leading to a sharply concentrated conditional distribution $p(r_i|\text{input})$.
Therefore, modeling $r_i$ as Gaussian draws around a single latent mean captures the local geometry of the response manifold, whereas assuming multiple distinct semantic modes is neither common nor realistic for LLaVA or similar MLLMs.

\subsection{Estimation of the Response Covariance}
We discuss how to practically obtain the base covariance $\bs \Sigma_r$.
Given a set $\{ \bs r_i^{(q)} \}_{q=1,\dots,Q;\; i=1,\dots,n_q}$ 
of stochastic responses sampled from the multimodal model when propmpted with a query $q$,
we estimate $\bs \Sigma_r$ as the empirical covariance across all responses:
\begin{equation}
  \label{eq:Sigma_r}
  \widehat{\bs\Sigma}_r
  = 
  \frac{1}{N}
  \sum_{q=1}^{Q}
  \sum_{i=1}^{n_q}
  \bigl(\bs r_i^{(q)} - \bar {\bs r}\bigr)
  \bigl(\bs r_i^{(q)} - \bar {\bs r}\bigr)^\top,
  \qquad
  \bar {\bs r} = \frac{1}{N}\sum_{q,i} \bs r_i^{(q)},
\end{equation}
where $N = \sum_q n_q$ is the total number of sampled responses.
We remove the unknown scale
$e^{u^{(q)}}$ by unit-trace normalization:
\[
\widehat{\bs\Sigma}_r 
\;\leftarrow\; \frac{\widehat{\bs\Sigma}_r}{\operatorname{tr}(\widehat{\bs\Sigma}_r)/d}.
\]
We then whiten all response embeddings by
\begin{equation}
  \tilde {\bs r_i}^{(q)} = 
  \widehat{\bs \Sigma}_r^{-1/2}
  \,(\bs r_i^{(q)}-\bar{\bs r}).
\end{equation}=
After whitening, all empirical semantic-volume computations
use $\tilde {\bs r_i}^{(q)}$ in place of $\bs r_i^{(q)}$.

\section{Decision Rule for Fused Uncertainty Statistic}
\label{app:decisionrule}

We assume that alibration queries $\{(u^\star_q,e_q)\}_{q=1}^N$ are i.i.d.\ draws from the same distribution as test queries, where $e_q=\mathbf{1}\{\hat y_q\neq y_q\}$. Let $(U,E)\in\mathbb{R}\times\{0,1\}$ denote a generic test query, where $U=u^\star$ is the fused uncertainty score and $E=\mathbf{1}\{\hat y\neq y\}$ is the error indicator.
For any threshold $\theta\in\mathbb{R}$, define the \emph{answered event} $A_\theta \coloneqq \{U\le \theta\}$, the
\emph{coverage} $C(\theta)\coloneqq \mathbb{P}(A_\theta)$, and the \emph{selective risk}
\begin{equation}
R(\theta)\;\coloneqq\;\mathbb{P}(E=1\mid A_\theta)
\;=\;\frac{\mathbb{P}(E=1, A_\theta)}{\mathbb{P}(A_\theta)} \qquad \text{(whenever } \mathbb{P}(A_\theta)>0\text{)}.
\end{equation}

Let $\{(U_q,E_q)\}_{q=1}^N$ be the held-out calibration set, assumed i.i.d..
For any $\theta$, define the answered count and the empirical selective risk:
\begin{align}
m(\theta) &\coloneqq \sum_{q=1}^N \mathbf{1}\{U_q\le \theta\},\\
\hat R_N(\theta) &\coloneqq 
\frac{\sum_{q=1}^N \mathbf{1}\{U_q\le \theta\}E_q}{m(\theta)}
\qquad \text{(defined when } m(\theta)\ge 1\text{)}.
\end{align}

Finally, let $\Theta_N$ be the finite set of candidate thresholds formed by the $N$ observed values of $\{U_q\}$, so that $|\Theta_N|\le N+1$. Fix $\delta\in(0,1)$ and a minimum answer count $m_0\ge 1$.

\begin{theorem}[Finite-sample risk control up to estimation error]
\label{thm:risk-gen}
With probability at least $1-\delta$, simultaneously for all
$\theta\in\Theta_N$ such that $\sum_{q=1}^N\mathbf{1}\{u^\star_q\le \theta\}\ge m_0$,
\[
R(\theta) \;\le\; \hat R_N(\theta) \;+\; \sqrt{\frac{\log\!\bigl(2|\Theta_N|/\delta\bigr)}{2m_0}}.
\]
Consequently, if $\theta^\star$ is chosen by empirical risk control
\[
\theta^\star \in \arg\max_{\theta\in\Theta_N}\Big\{\hat C_N(\theta):\hat R_N(\theta)\le r,\ 
\sum_{q=1}^N\mathbf{1}\{u^\star_q\le \theta\}\ge m_0\Big\},
\]
then with probability at least $1-\delta$,
\[
R(\theta^\star)\;\le\; r \;+\; \sqrt{\frac{\log\!\bigl(2|\Theta_N|/\delta\bigr)}{2m_0}}.
\]
\end{theorem}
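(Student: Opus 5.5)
Fix a threshold $\theta\in\Theta_N$. The plan is to condition on which calibration points fall in the answered region, apply Hoeffding to the errors among them, and then take a union bound over the finite grid $\Theta_N$.

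\emph{Conditioning step.} Let $I_\theta=\{q: U_q\le\theta\}$ be the set of answered calibration queries, and set $m(\theta)=|I_\theta|$. Because the pairs $(U_q,E_q)$ are i.i.d., conditioning on the vector $(\mathbf 1\{U_q\le\theta\})_{q=1}^N$ leaves the errors $\{E_q\}_{q\in I_\theta}$ i.i.d.\ Bernoulli with mean $\mathbb P(E=1\mid A_\theta)=R(\theta)$. On the event $m(\theta)\ge m_0$, Hoeffding's inequality then gives
\[
\mathbb P\bigl(R(\theta)-\hat R_N(\theta)>\epsilon\,\big|\,I_\theta\bigr)\le \exp\bigl(-2m(\theta)\epsilon^2\bigr)\le \exp\bigl(-2m_0\epsilon^2\bigr).
\]
Integrating over $I_\theta$ removes the conditioning and yields the same bound unconditionally. (Using the two-sided version costs a factor of 2, which matches the $2$ in the logarithm.)

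\emph{Uniformity step; this is the main obstacle.} As written, $\Theta_N$ is built from the observed $U_q$, so it is data-dependent, and a naive union bound over it is not valid. I would first try replacing $\Theta_N$ by a deterministic grid of at most $N+1$ population quantile levels; thresholding at the $k$-th order statistic is equivalent to answering a fixed count $k$. If that reduction is awkward, I would fall back on the exchangeability structure. Conditional on the order statistics of $U$, the errors attached to the $k$ smallest scores are i.i.d.\ draws from the conditional error law given the score. The answered set for the $k$-th order statistic is then a deterministic-index set of size $k$, and Hoeffding applies to it with $k\ge m_0$.

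\emph{Union bound and conclusion.} With the grid handled, take a union bound over at most $|\Theta_N|$ thresholds and choose
\[
\epsilon=\sqrt{\log(2|\Theta_N|/\delta)/(2m_0)}.
\]
This gives the simultaneous statement with probability at least $1-\delta$. The consequence for $\theta^\star$ is then immediate: $\theta^\star$ lies in the admissible set, so $\hat R_N(\theta^\star)\le r$ and $m(\theta^\star)\ge m_0$, and the uniform bound applies to it on the same event.
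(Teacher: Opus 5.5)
Your skeleton (Hoeffding for a fixed $\theta$ conditional on the answered set, the bound $\exp(-2m_0\epsilon^2)$ on $\{m(\theta)\ge m_0\}$, a union bound, the choice of $\epsilon$, and plugging in $\theta^\star$) is exactly the paper's proof. You are right that the uniformity step is the obstacle. The paper applies the fixed-$\theta$ bound to each element of $\Theta_N$ as if $\Theta_N$ were deterministic, so your objection applies to its proof too.

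\textbf{The gap is that your repair does not work.} Indexing thresholds by rank is the right idea: with continuous scores $m(U_{(k)})=k$, so the union bound runs over the fixed set $k\in\{m_0,\dots,N\}$. But conditioning on the order statistics is the wrong conditioning.
\begin{itemize}
\item Given all the scores, the errors attached to the $k$ smallest are independent Bernoulli$(\eta(U_{(j)}))$ with $\eta(u)=\mathbb{P}(E=1\mid U=u)$. They are not i.i.d.\ with mean $R(U_{(k)})$.
\item Hoeffding then only shows that $\hat R_N(U_{(k)})$ is close to $\frac1k\sum_{j\le k}\eta(U_{(j)})$.
\item That average differs from $R(U_{(k)})=\mathbb{E}[\eta(U)\mid U\le U_{(k)}]$ by a fluctuation of the same order $k^{-1/2}$, coming from the scores themselves. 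This is exactly the randomness you froze by conditioning on them.
\end{itemize}
The population-quantile grid fails for a related reason. $\theta^\star$ is an order statistic, not a grid point, and $R(\theta)$ is not monotone in $\theta$. So bounds at grid points do not transfer to $R(U_{(k)})$ without extra slack.

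\textbf{What does work is to condition only on $U_{(k)}=t$.} The $k-1$ calibration pairs with $U_q<t$ are then i.i.d.\ from the law of $(U,E)$ given $U<t$. Their errors are therefore i.i.d.\ Bernoulli$(R(t))$, and Hoeffding applies with $k-1$ samples. The point sitting at the threshold costs at most $1/k$, since $\hat R_N(U_{(k)})\ge\frac{k-1}{k}\bar E_{k-1}\ge\bar E_{k-1}-\frac1k$, where $\bar E_{k-1}$ is the mean error below $t$. Union-bounding over $k\ge m_0$ (with $m_0\ge 2$) gives, with probability at least $1-\delta$,
\[
R(U_{(k)})\le \hat R_N(U_{(k)})+\sqrt{\log(N/\delta)/(2(m_0-1))}+1/m_0
\]
for all such $k$, and hence for $\theta^\star$. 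This is rigorous, but it has $m_0-1$ and an additive $1/m_0$ instead of the theorem's constant. The exact constant is recovered if the answered set is defined by the strict inequality $U_q<\theta$. Conditioning on $U_{(k+1)}$ then leaves exactly $k$ i.i.d.\ answered pairs, whose mean error is precisely the population selective risk at that threshold.
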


\begin{proof}[Proof]
We prove that with probability at least $1-\delta$, simultaneously for all $\theta\in \Theta_N$ with $m(\theta)\ge m_0$,
\begin{equation}
\label{eq:uniform-dev-goal}
R(\theta) \;\le\; \hat R_N(\theta) \;+\; \sqrt{\frac{\log\!\bigl(2|\Theta_N|/\delta\bigr)}{2m_0}}.
\end{equation}
This implies the  bound for $\theta^\star$ chosen by empirical risk control.

Fix any $\theta\in\mathbb{R}$ and consider the i.i.d.\ sequence
\[
X_q(\theta)\;\coloneqq\;\mathbf{1}\{U_q\le \theta\}E_q \in \{0,1\},\qquad
Y_q(\theta)\;\coloneqq\;\mathbf{1}\{U_q\le \theta\} \in \{0,1\}.
\]
Then $\sum_q Y_q(\theta) = m(\theta)$ and $\sum_q X_q(\theta)$ is the number of errors among answered queries.

Let $I(\theta)\coloneqq\{q:U_q\le\theta\}$ denote the (random) index set of answered points and note that
\[
\hat R_N(\theta)=\frac{1}{m(\theta)}\sum_{q\in I(\theta)} E_q.
\]
Condition on the event $\{m(\theta)=m\}$ for some integer $m\ge 1$.
Under i.i.d. assumption, the answered examples are i.i.d.\ draws from the conditional distribution
given $A_\theta=\{U\le \theta\}$, hence the conditional distribution of $\{E_q\}_{q\in I(\theta)}$ is that of
$m$ i.i.d.\ Bernoulli variables with mean $R(\theta)=\mathbb{P}(E=1\mid A_\theta)$.
Therefore, conditional on $m(\theta)=m$,
\begin{equation}
\label{eq:hoeffding-cond}
\mathbb{P}\!\left( R(\theta) - \hat R_N(\theta) \ge \varepsilon \,\middle|\, m(\theta)=m \right)
\;\le\; \exp(-2m\varepsilon^2),
\end{equation}
by Hoeffding's inequality.

Now restrict to thresholds with $m(\theta)\ge m_0$. On the event $\{m(\theta)\ge m_0\}$, we have
$\exp(-2m\varepsilon^2)\le \exp(-2m_0\varepsilon^2)$, hence taking expectation over $m(\theta)$ yields the unconditional bound
\begin{equation}
\label{eq:fixed-theta-bound}
\mathbb{P}\!\left( R(\theta) - \hat R_N(\theta) \ge \varepsilon \ \text{and}\ m(\theta)\ge m_0 \right)
\;\le\; \exp(-2m_0\varepsilon^2).
\end{equation}

We now apply a union bound over the finite set $\Theta_N$.
For any $\varepsilon>0$,
\begin{align}
\mathbb{P}\!\left(\exists \theta\in\Theta_N:\ R(\theta)-\hat R_N(\theta)\ge \varepsilon,\ m(\theta)\ge m_0\right)
&\le \sum_{\theta\in\Theta_N}
\mathbb{P}\!\left( R(\theta) - \hat R_N(\theta)\ge \varepsilon,\ m(\theta)\ge m_0\right)\nonumber\\
&\le |\Theta_N|\,\exp(-2m_0\varepsilon^2),
\end{align}
where the last step uses~\eqref{eq:fixed-theta-bound} for each fixed $\theta$.

Choose
\[
\varepsilon \;\coloneqq\; \sqrt{\frac{\log(2|\Theta_N|/\delta)}{2m_0}}.
\]
Then $|\Theta_N|\exp(-2m_0\varepsilon^2)=|\Theta_N|\exp(-\log(2|\Theta_N|/\delta))=\delta/2$.
Therefore, with probability at least $1-\delta/2$, for all $\theta\in\Theta_N$ with $m(\theta)\ge m_0$,
\[
R(\theta) \;<\; \hat R_N(\theta) + \sqrt{\frac{\log(2|\Theta_N|/\delta)}{2m_0}}.
\]

Let $\theta^\star$ be any solution of the empirical risk-controlled selection rule:
\begin{equation}
\label{eq:theta-star-def-app}
\theta^\star \in \arg\max_{\theta\in\Theta_N}
\Big\{\hat C_N(\theta):\ \hat R_N(\theta)\le r,\ m(\theta)\ge m_0\Big\},
\end{equation}
where $\hat C_N(\theta)=m(\theta)/N$ is empirical coverage.
On the high-probability event from above, applying~\eqref{eq:uniform-dev-goal} to $\theta^\star$ yields
\[
R(\theta^\star) \;\le\; \hat R_N(\theta^\star) \;+\; \sqrt{\frac{\log(2|\Theta_N|/\delta)}{2m_0}}
\;\le\; r \;+\; \sqrt{\frac{\log(2|\Theta_N|/\delta)}{2m_0}},
\]
since $\hat R_N(\theta^\star)\le r$ by definition~\eqref{eq:theta-star-def-app}.
\end{proof}

\section{Algorithmic Summary}

\begin{algorithm}[H]
   \caption{Calibration and Inference for Joint Bayesian Uncertainty Fusion}
   \label{alg:uq-calibration}
\begin{algorithmic}
   \STATE {\bfseries Inputs:} 
      Development set $\mathcal{D}_{\text{dev}}=\{(s_i,\tilde V_i,y_i)\}_{i=1}^N$ (for prior \& threshold only), response dimension $d$, 
      per-response weights $\{w_{ij}\}$, 
     hyperparameter $z_\alpha$ for final score output.
   \STATE {\bfseries Outputs:} Calibrated \emph{prior} parameters $(\hat\alpha_0,\hat\beta_0,\hat\sigma_0^2)$, decision threshold $\hat\theta$.

      \STATE {\texttt {\# Phase 1: Prior Calibration (\cref{def:dataprior})}}
   \FOR{each example $i$}
       \STATE Compute statistic $s_i = g({\boldsymbol{\mu}}_{I}, {\boldsymbol{\Sigma}}_{I}, {\boldsymbol{\mu}}_{T}, {\boldsymbol{\Sigma}}_{T})$.
   \ENDFOR
   \STATE Estimate prior parameters $(\alpha_0, \beta_0, \sigma_0^2)$ via weighted least squares, shown in \cref{eq:prior_calib_beta}--\cref{eq:prior_calib_alpha}.

   \STATE {\texttt{\# Phase 2: Posterior Inference Evidence (\cref{thm:weighted-like})}}
   \FOR{each query}
       \STATE Compute weighted scatter: $\tilde V=\log\det\!\big(\tilde{\bs R}\tilde{\bs R}^\top\big),\quad
         \tilde{\bs R} = \begin{bmatrix}\sqrt{w_{1}}\bs r_{1}^\top\\ \cdots\\ \sqrt{w_{n}}\bs r_{n}^\top\end{bmatrix}.$
       \STATE Compute weight summaries $\bar w$ and $\nu_{\mathrm{eff}}$.
       \STATE Compute evidence parameters (per query):
       \[
         a \;=a_0(d,n)+n\log\bar w \;-\;\frac{n}{2}\Big(\frac{n}{\nu_{\mathrm{eff}}}-1\Big),
      \quad
         \tau^2 \;=\; \sum_{j=1}^{n}\psi_1\!\Big(\frac{d+1-j}{2}\Big).
       \]
       \STATE Compute prior mean: $m_0=\hat\alpha_0+\hat\beta_0\,s$.
       \STATE Compute posterior variance and mean: $
         \sigma_{\text{post}}^2 
           = \left( \tfrac{1}{\hat\sigma_0^2} + \tfrac{n^2}{\tau^2} \right)^{-1}, \quad
         m_{\text{post}} 
           = \sigma_{\text{post}}^2
             \left( \tfrac{m_0}{\hat\sigma_0^2} + \tfrac{n(\tilde V - a)}{\tau^2} \right).$
       \STATE Store $m_{\text{post}} + z_\alpha \sigma_{\text{post}}^2 $ as the fused uncertainty score for this query.
   \ENDFOR

   \STATE {\texttt{\# Phase 3: Calibration and Threshold Selection}}
   \STATE Choose global threshold $\theta^*$ by risk control:
   \[
    \theta^* \;=\;
    \sup_\theta \left\{\frac{\sum_{q=1}^N \mathbf{1}\{\, u^*_q \le \theta,\, \hat y_q \ne y_q \,\}}
             {\sum_{q=1}^N \mathbf{1}\{\, u^*_q \le \theta \,\}} \le r \,\right\}.
   \]

   \STATE {\texttt{\# Inference at Test Time}}
   \STATE For a new query, compute $(s,\tilde V,\bar w,\nu_{\mathrm{eff}})$, then $(a,\tau^2)$ analytically, then $m_{\text{post}}$.
   \STATE \textbf{If} $m_{\text{post}}\le \theta^*$ \textbf{then} output prediction; else abstain.
\end{algorithmic}
\end{algorithm}

\section{More Ablations}
\label{app:moreabl}
We run more ablations in this section. 

\subsection{Number of Generations}
First, to examine how the number of generations affects performance across metrics, we run an ablation study on a VQAv2 test set, varying the number of sampled responses per instance from 2 to 50, following the same setup in \cite{lau2025uncertainty}. \cref{fig:abl-more}(a) shows that AUROC generally increases as more generations are added for all methods. However, FUSE reaches strong performance with substantially fewer generations than the baselines, suggesting greater sample efficiency. This follows the same trend as in UMPIRE but FUSE slightly outperforms UMPIRE. In contrast, competing methods depend more heavily on additional samples to keep improving. Overall, these results indicate that our approach robustly extracts correctness signals even when only a small number of generations are available.

\subsection{Evaluation Criteria}
\begin{figure}[h!]
    \centering
    \includegraphics[width=0.9\linewidth]{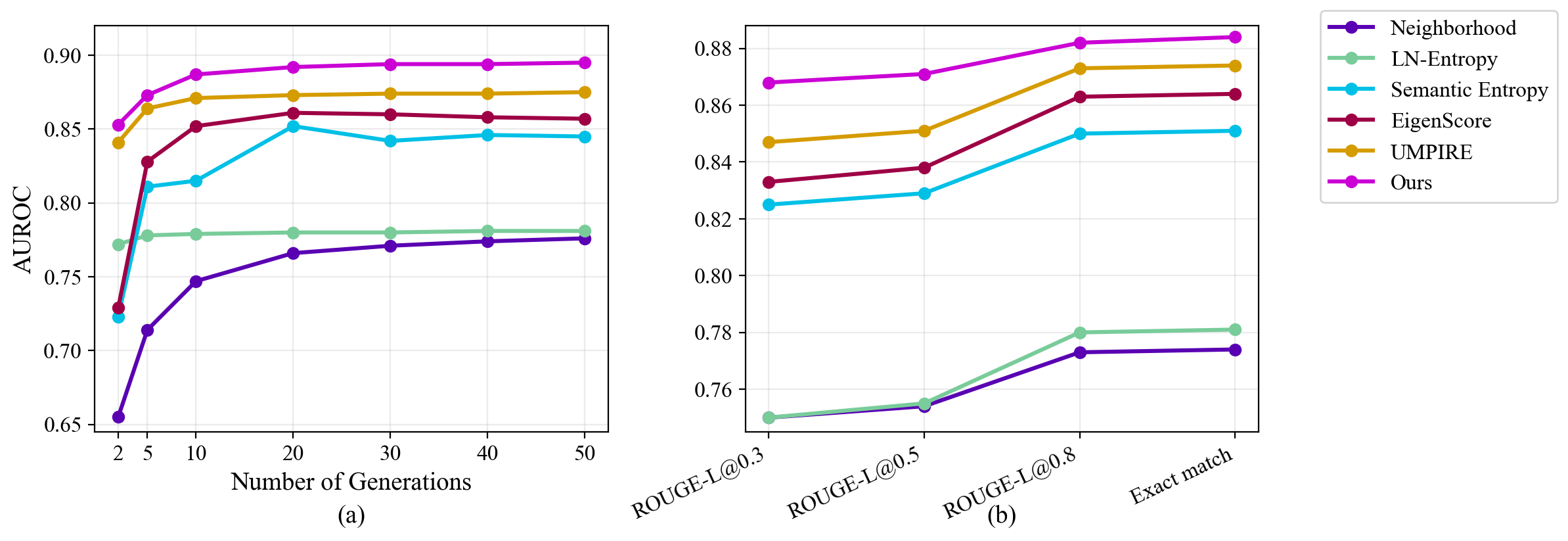}
    \caption{Ablations on number of generations and evaluation criteria.}
    \label{fig:abl-more}
\end{figure}
Unlike Exact Match, which is a strict binary criterion that only counts a prediction as correct if it matches the reference answer exactly, ROUGE-L provides a graded notion of correctness by measuring the longest common subsequence overlap between the prediction and reference. This makes ROUGE-L more tolerant to paraphrases, minor wording differences, and partially correct responses—yielding a spectrum of correctness thresholds rather than an all-or-nothing judgment. Following \cite{kuhn2023semantic, lau2025uncertainty}, we further test our method and the baselines under multiple ROUGE-L levels. \cref{fig:abl-more}(b) reports AUROC on a subset of the VQAv2 test set for different correctness criteria. Across all choices of evaluation function, our approach consistently surpasses the baseline methods, indicating that its advantage is not tied to any single correctness definition. Overall, the results emphasize the robustness of our method under varying criteria for judging answer correctness.

\subsection{Uncertainty Sources}
\begin{table*}[t]
\centering
\resizebox{\linewidth}{!}{
\begin{tabular}{lcccccccccccccccc}
\hline
& \multicolumn{3}{c}{\textbf{AUROC} ($\uparrow$)} 
& \multicolumn{3}{c}{\textbf{Pearson Corr.} ($\uparrow$)} 
& \multicolumn{3}{c}{\textbf{Spearman Corr.} ($\uparrow$)} 
& \multicolumn{3}{c}{\textbf{AURAC} ($\uparrow$)} \\
\cline{2-4}\cline{5-7}\cline{8-10}\cline{11-13}
\textbf{Method }
& VQA & OK & AdV 
& VQA & OK & AdV 
& VQA & OK & AdV 
& VQA & OK & AdV  \\
\hline
Prior-Only                 & 0.322 & 0.230 & 0.283  & 0.304 & 0.515 & 0.373 & 0.403 & 0.393 & 0.417  & 0.336 & 0.321 & 0.411 \\
Evidence-Only              & 0.853 & 0.746 & 0.763  & 0.902 & 0.895 & 0.941 & 0.852 & 0.901 & 0.875  & 0.862 & 0.743 & 0.710 \\
Fused               & \textbf{0.883} & \textbf{0.761} & \textbf{0.796} & \textbf{0.945} & \textbf{0.945} & \textbf{0.974} & \textbf{0.921} & \textbf{0.944} & \textbf{0.941} & \textbf{0.940} & \textbf{0.770} & \textbf{0.793} \\
\hline
\end{tabular}
}
\caption{Ablations on uncertainty sources. 
We report AUROC, calibration correlations (Pearson, Spearman), and AURAC (higher is better).}
\label{tab:abl_more}
\end{table*}

We isolate the uncertainty sources and use the corresponding statistic (normalized) as the single measure of uncertainty to ablate the effect of fusing the sources of uncertainty. In \cref{tab:abl_more}, we list the performance of using prior only and evidence only. As results suggest, fusing the sources of uncertainty significantly improve the final performance in terms of all metrics. Interestingly, the evidence statistic alone is a strong baseline. This is expected as our evidence formulation is similar to UMPIRE, which is a robust baseline itself. This again emphasizes the robustness brought by fusing the uncertainty sources in a principled Bayesian approach.

\subsection{Other Ablations Rationale}
\paragraph{Logistic $u^\star$.}
This ablation evaluates whether downstream failure can be predicted more effectively by explicitly combining the posterior mean and posterior uncertainty into a probabilistic measure directly.
It also serves as a supervised calibration baseline when correctness labels are available.

\paragraph{Mean-only $u^\star$.}
This ablation removes the posterior dispersion term and uses $u^\star=m_{\text{post}}$ as the uncertainty score.
It directly tests whether posterior variance is necessary in practice, or whether a point estimate is sufficient to capture useful uncertainty.

\paragraph{Quantile threshold $\theta^\star$.}
This ablation replaces any learned or tuned decision threshold with a simple global quantile rule.
It tests whether the method remains usable under a lightweight, easy-to-apply thresholding strategy and avoids reliance on label-dependent threshold selection.
This also helps separate the quality of the uncertainty score from the specifics of threshold tuning.

\paragraph{Global tuning.}
This ablation jointly tunes $(\alpha_0,\beta_0,\sigma_0^2,a,\tau^2)$ using labeled correctness indicators, rather than computing some parameters analytically.
It tests whether end-to-end supervised tuning yields additional gains by absorbing dataset- or model-specific idiosyncrasies.
Comparing against analytic calibration clarifies whether improvements come from the probabilistic structure itself or from label-driven parameter fitting.

\paragraph{OLS fit for $\tilde V$.}
This ablation replaces the theory-motivated likelihood calibration with a simple linear Gaussian regression between semantic evidence and the latent scale.
It tests whether the distributional assumptions (e.g., Wishart-inspired moment structure) are necessary, or whether a purely empirical linear fit already captures the relevant relationship.
This provides a strong ``minimal modeling'' baseline for likelihood estimation.

\paragraph{Unweighted semantic evidence.}
This ablation computes semantic evidence without weights, using the unweighted formulation from \cref{thm:exact-linearity}.
It tests whether weighting is essential to suppress artifacts from redundant or near-duplicate samples and to better reflect semantic (rather than superficial) diversity.
Any performance difference indicates the contribution of the weighting mechanism.
\section{Hyperparameters}
We clarify that we select $z_\alpha$
 on a held-out calibration set that is disjoint from both training and test data. Concretely, after computing the posterior mean and variance for each calibration example, we form the final score $u^\star_i = m_{\text{post}, i} +z_\alpha \sigma_{\text{post}, i}$
 We then sweep $z_\alpha$
 over a predefined grid and choose the value that gives the best performance on the calibration set under the target objective. In our case, since FUSE is used for uncertainty estimation and selective answering, $z_\alpha$
 is selected to optimize AUROC on the held-out set. The selected $z_\alpha$
 is then fixed and used for all test examples. In practice, $z_\alpha$ is close to 2.
\section{Qualitative Results}
\label{app:qualitative}
We demonstrate the qualitative results of FUSE on VQA datasets in \cref{fig:qualitative}. In the 10 examples we show, the first 8 are considered ``successful,'' where the \textit{normalized} FUSE score correctly associates low uncertainty with incorrect answers and abstains if uncertainty is too high. We also report the NTL score, calculated as the average per-token probability across all answers. For the Motorcycle example, the sampled responses are semantically similar, thus a low FUSE uncertainty, but the NTL score is not as high as expected. For the first toilet example, both FUSE uncertainty and NTL reflect the same uncertain outcome. Similarly, for the elephant example, both metrics give the same outcome. Interestingly, for the bus energy VQA, while the VLM overwhelmingly produced ``Electricity'' as the response, FUSE uncertainty measure is still high, possibly due to the existence of ``Gas'' in the response as well as the lack of information in the input image. For the New York Yankees example, while FUSE score is low, representing a fairly certain answer, the NTL again is not as high as expected, even though New York and Yankees should be semantically close. 

In the last two examples, while FUSE's uncertainty measure is extremely low, both answers are wrong. This reflects a failure mode of FUSE when the VLM is prompted to respond with numerical answers. In the first failure, the VLM confidently outputs ``1940,'' but the aircraft shown is actually a Junkers Ju 52/3m produced in 1930. Note that in the OKVQA dataset, the answer was labeled wrong as 1945. The NTL score is also high for the ``1940'' answer, which is expected as the model's confidence is reflected in the per-token likelihood for this case. In the second failure, while there's a fixed answer ``950,'' the desired behavior should be that FUSE outputs high scores like in the bus example. Interestingly, the NTL score is not as high as expected, which means the next-token likelihood does reflect the internal uncertainty the model has that FUSE fails to capture in this case. The numerical-respose results are interesting and we will probe into such cases in future work.

\begin{figure}[H]
    \centering
    \includegraphics[width=\linewidth]{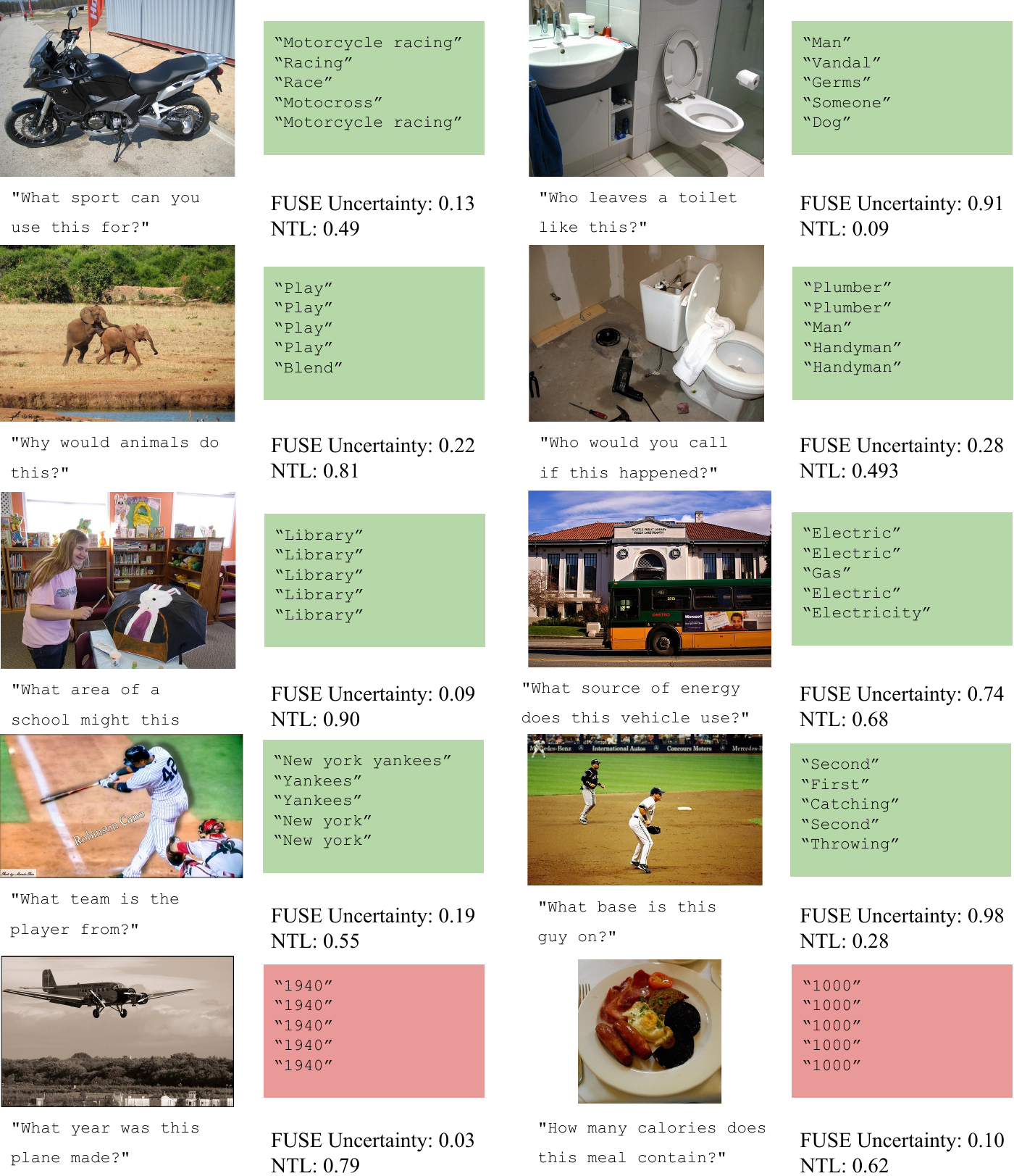}
    \caption{Qualitative results of running FUSE on VQA datasets. The green background represents successful cases and the red background represents failure cases.}
    \label{fig:qualitative}
\end{figure}
\section{Limitations}
\label{app:limitations}
While FUSE considers both sources of uncertainty in a principled Bayesian manner, the method, as it currently stands, mainly supports visual-language models (VLMs). Other MLLMs such as auditory models are not currently supported due to the reliance on CLIP-like encoders. However, we still think the cross-modal alignment is a good source for aleatoric uncertainty as the data representation prior, and could be generalized to other modalities as well. Moreover, the adapter for CLIP needs light retraining. We are currently looking into other alternatives. Lastly, as part of future work, we aim to deploy FUSE to robotic applications for 3D scene understanding. 

\end{document}